\documentclass[sts]{imsart}
\usepackage[english]{babel}

\usepackage[utf8]{inputenc} 
\usepackage[T1]{fontenc}    
\usepackage[colorlinks=true,linkcolor=blue]{hyperref}      
\usepackage{url}           
\usepackage{booktabs}       
\usepackage{amsfonts}      
\usepackage{nicefrac}      
\usepackage{microtype}
\usepackage[numbers]{natbib}
\usepackage{centernot}
\usepackage[dvipsnames]{xcolor}
\usepackage{pdflscape}
\usepackage{float}
\usepackage{graphics}
\usepackage{subcaption}
\graphicspath{{./figures/}}
\DeclareGraphicsExtensions{.pdf}
\usepackage{tikz}
\usetikzlibrary{positioning, arrows.meta}

\usepackage{algorithm}
\usepackage{algpseudocode}

\usepackage{mathtools}
\usepackage{amsmath}
\usepackage{tabularx} 
\usepackage{multirow}
\usepackage{caption}            
\usepackage{bm}
\usepackage{amsthm}
\usepackage{amssymb}
\usepackage[capitalize]{cleveref}

\usepackage{comment}

\usepackage{graphicx}

\usepackage{appendix}

\startlocaldefs

\theoremstyle{plain}

\newtheorem{lemma}{Lemma}

\theoremstyle{definition}

\newtheorem{example}{Example}[section]
\newtheorem{remark}{Remark}
\newtheorem{definition}{Definition}[section]

\newcommand{\ci}[3]{#1 \perp\kern-5pt \perp #2 \mid #3}
\endlocaldefs

\begin{document}

\begin{frontmatter}

\title{Null importance: Disentangling relevance for interpretable machine learning}
\runtitle{Null importance}

\begin{aug}
\author[A]{\fnms{Garvesh}~\snm{Raskutti}\ead[label=e1]{raskutti@stat.wisc.edu}}
\author[A]{\fnms{Kris}~\snm{Sankaran}\ead[label=e2]{ksankaran@wisc.edu}}
\author[A]{\fnms{Jiaxin}~\snm{Ye}\ead[label=e3]{jye73@wisc.edu}}

\address[A]{Department of Statistics, University of Wisconsin--Madison\printead[presep={\ }]{e1,e2,e3}.}
\end{aug}

\begin{abstract}
Feature importance is at the heart of many interpretable machine learning methods and frameworks, but the term ``importance'' encompasses several fundamentally different notions of relevance. We develop a unified perspective based on \emph{null importance}: a population-level characterization of when a feature is irrelevant under a specified notion of relevance. We consider standard notions of null importance arising from marginal and conditional statistical relevance, predictive risk, functional invariance, and causal effects, and show how these notions answer different scientific questions. We illustrate the framework through two applications in which the distinction is particularly consequential: algorithmic fairness, where common fairness criteria correspond to different notions of null importance, and genomic perturbation modeling, where different standard choices of relevance lead to different conclusions about what a prediction model has learned. The framework connects three aspects of feature analysis: the scientific question defining relevance, the data and model assumptions that shape how the different null notions relate, and the methods used to assess importance. We establish sufficient conditions under which standard notions of null importance coincide and give counterexamples showing how they diverge when those conditions fail. We then ask which notions of null different families of methods actually target and analyze the conditions under which their zero-importance statistics can identify their targets. Finally, simulations encompassing feature dependence, redundancy, nonlinearity, hidden features and other standard phenomena as well as real-data case studies on attribution in images and multiomics data provide empirical evidence for these theoretical distinctions and their practical consequences for interpreting feature importance methods. Taken together, these results provide a common statistical language for relating scientific questions, data-generating assumptions, and algorithms. This language makes precise the conclusions about feature relevance that a given feature analysis method can legitimately support.
\end{abstract}

\begin{keyword}
\kwd{feature importance}
\kwd{interpretable machine learning}
\kwd{null hypothesis}
\end{keyword}

\end{frontmatter}

\section{Introduction}

Modern statistical and machine learning methods provide increasingly powerful tools primarily engineered for predictive accuracy. However, applying these models to solve real-world problems successfully demands an understanding of the underlying system. As emphasized by Breiman's distinction between the ``two cultures'' of statistical 
modeling~\cite{breiman2001statistical}, one culture prioritizes predictive accuracy, while another seeks to understand the structure underlying prediction. Truly interpretable machine learning unifies these two cultures: it acknowledges the predictive design of modern models but insists that for real-world deployment, we must 
go beyond asking \emph{can we predict $Y$ from $X$?} to rigorously examine \emph{which features matter, and in what sense?}

This question has no single answer. A feature may be relevant because it is statistically associated with an outcome, provides information beyond other variables, improves predictive performance, changes a fitted prediction function, or has a causal effect. These notions can coincide in simple settings, but they differ systematically in the presence of dependence, redundancy, interactions, confounding, mediation, and model misspecification.

We organize these distinctions around a simple idea: \emph{null importance}. Classical statistics uses a null hypothesis to specify what it means for an effect or relationship to be absent—for example, a zero regression coefficient, independence, or no causal treatment effect~\cite{casella2002statistical,fisher1925statistical}. We use the same logic for feature importance: a \emph{null importance} is a population-level statement specifying what it means for a feature to be irrelevant under a particular notion of relevance.

Null importance provides a common language for connecting three parts of interpretable feature analysis that are often considered separately. First, at the level of the \emph{scientific question}, it specifies what kind of relevance or irrelevance is of interest in a particular application. Second, at the level of the \emph{data and data-generating process}, it allows us to state assumptions under which different notions of relevance coincide, or to understand when they necessarily differ. Third, at the level of the \emph{method}, it provides a way to identify which notion of null importance a feature analysis method targets and to determine whether the assumptions required for that interpretation are satisfied. Our goal is therefore not to identify a universally correct definition of feature importance. Instead, we provide a framework for connecting \emph{scientific questions}, \emph{assumptions about the data and model}, and \emph{feature analysis methods}. 

The framework applies across statistics, machine learning, and scientific applications. Classical feature selection and conditional independence testing formalize statistical notions of irrelevance
\cite{guyon2003introduction,kohavi1997wrappers,Tibshirani1996Lasso,
CandèsFanJansonLv2018Knockoffs,ShahPeters2020GCM}. Predictive importance methods define relevance through changes in predictive risk
\cite{FisherRudinDominici2019,LeiZhuWasserman2018LOCO}. Explainable artificial intelligence uses attribution and sensitivity measures to characterize properties of learned prediction functions
\cite{LundbergLee2017SHAP,SundararajanTalyYan2017IG,Molnar2022Interpretable,
murdoch2019definitions}. Causal inference defines relevance through interventions and counterfactual outcomes
\cite{Pearl2009Causality, peters2017elements}. These literatures developed around different objectives, and the differences among their definitions of relevance are often more consequential than the differences among the algorithms used to estimate them.

The same perspective also clarifies questions outside conventional feature selection. Algorithmic fairness, for example, can be viewed as asking under which notion of relevance a protected attribute should be irrelevant. Demographic parity, equalized odds, individual fairness, proxy fairness, and counterfactual fairness impose different forms of statistical, conditional, functional, or causal nullity
\cite{dwork2012fairness,hardt2016equality,datta2017proxy,
kusner2017counterfactual}. Similarly, in scientific applications such as genomic perturbation modeling, one may ask whether a perturbation is associated with a response, improves prediction, or has a direct causal effect. These represent distinct null questions even when concerning the same variables, a divergence carrying critical consequences for both scientific applications and methods development. Consequently, different methods should neither be swapped interchangeably within an analysis nor benchmarked against one another in methodological studies unless they are explicitly known to target the same underlying notion of null importance.

The remainder of the paper develops this perspective in three stages. Section~2 focuses on null importances at the scientific question level. We define several population-level notions of null importance corresponding to statistical, predictive, functional, and causal relevance. Section~3 asks \emph{what assumptions can we make about the data and data-generating process?} We characterize conditions under which these notions coincide and give counterexamples showing how they separate under dependence, redundancy, interactions, confounding, and model misspecification. Section~4 then asks \emph{which methods should we use?} We organize representative feature selection, attribution, sensitivity, predictive, and causal methods according to the notions of null importance they target and the conditions under which their outputs identify those notions. Section~5 provides a number of synthetic and real data case studies including local image attribution and multiomics that are consistent with the messages in Sections 2-4.

The resulting message is simple: interpretable feature importance is a family of questions whose answers depend on what we mean by relevance, what assumptions we are willing to make about the data, and what method we use to assess it. Null importance provides a common language for making these choices explicit and for connecting a scientific claim to the assumptions and methods needed to support it.

\subsection{Related work and positioning}

Interpretability has been studied from several complementary perspectives. One emphasizes \emph{intrinsic interpretability}: models whose structure is sufficiently transparent that no post-hoc explanation is required \citep{lipton2018mythos,Rudin2019StopExplaining}. A second focuses on \emph{post-hoc explanation} of complex models, including feature attribution, surrogate models, examples, concepts, and counterfactuals. A third emphasizes the \emph{scientific or practical purpose} of an interpretation, asking whether an explanation is relevant to the question and audience of interest. These perspectives have motivated numerous taxonomies of interpretable machine learning and explainable AI \citep{DoshiVelezKim2017RigorousScience,GuidottiMonrealeRuggieriTuriniGiannottiPedreschi2018Survey, murdoch2019definitions,ArrietaEtAl2020ExplainableAI}.

Our work complements these taxonomies by organizing interpretability methods according to a different dimension: the \emph{population-level null notion of relevance} represented by zero importance. This distinction is important because methods that appear similar computationally may answer different questions, while methods from very different methodological families may target the same notion of irrelevance. In particular, the predictive, descriptive, and relevance (PDR) framework of \citep{murdoch2019definitions} emphasizes that relevance depends on the purpose of an interpretation. We make this relevance notion explicit statistically by defining the corresponding population null and studying the assumptions under which it can be identified.

A related literature asks how explanations should be \emph{evaluated}. Doshi-Velez and Kim \citep{DoshiVelezKim2017RigorousScience} called for a rigorous science of interpretability, while subsequent work has proposed criteria including faithfulness, sensitivity, robustness, consistency, and human usefulness. For example, Adebayo et al. \citep{adebayo2018sanity} showed that some saliency methods can be insensitive to the model or labels they purport to explain, motivating systematic ``sanity checks.'' Axiomatic approaches to attribution similarly specify desirable properties that explanation methods should satisfy \citep{SundararajanTalyYan2017IG,LundbergLee2017SHAP}. These perspectives are complementary to ours. Rather than asking only whether an explanation satisfies a generic evaluation criterion, we ask what population-level claim is represented by null importance and whether the proposed statistic actually identifies that claim.

This distinction relates closely to work on \emph{stability, reproducibility, and veridical data science}. \citep{yu2013stability} emphasizes stability under appropriate perturbations as a fundamental requirement for reliable statistical conclusions, while~\citep{yu2020veridical} place stability within the broader Predictability Computability Stability (PCS) framework for trustworthy data science. Stability and null identification address different aspects of reliability: a feature-importance conclusion may be stable but consistently target the wrong notion of relevance, or it may target the appropriate null but be unstable in finite samples. Thus, null importance provides a complementary component of a broader trust standard for feature analysis.

Human-centered perspectives further emphasize that explanations are shaped by the purpose for which they are provided and the needs of their audience \citep{miller2019explanation}. This reinforces our view that there is no single notion of feature relevance. A practitioner concerned with prediction may regard a feature as irrelevant if removing it does not change predictive risk, a scientist may instead require invariance of a response function, and a causal investigator may require that an intervention have no effect. These are different scientific questions rather than competing estimates of a single underlying quantity.

Our perspective is therefore broader than a taxonomy of explainable AI methods and narrower than a general theory of interpretability. We focus on a common statistical question that arises across feature selection, feature attribution, sensitivity analysis, predictive modeling, and causal analysis: \emph{what does it mean for a feature to be irrelevant?} Once a null notion is specified, three questions follow naturally: what assumptions make that null identifiable; when is it equivalent to other notions of irrelevance; and which methods target and identify it? This leads to the central distinction in our paper between a method's \emph{targeted null} and the null that its zero-importance statistic actually \emph{identifies}. The subsequent sections develop these connections formally and examine their consequences through theoretical results, counterexamples, simulations, and applications.

\section{Null Importance: Definitions and Examples}
\label{sec:null_notions}

The notion of feature importance depends fundamentally on the underlying notion of scientific relevance. To make these notions precise, let $X = (X_1,\ldots,X_p) \in \mathcal{X}$ denote a vector of features, where $\mathcal{X} = \mathcal{X}_1 \times \cdots \times \mathcal{X}_p$, and let $Y \in \mathcal{Y}$ denote a response variable.

\begin{definition}[Null Importance]
Feature $X_j$ satisfies the null importance property, denoted by $\mathcal{N}_j$, if $X_j$ is deemed uninformative for $Y$ under a chosen criterion of relevance.
\end{definition}

In many settings, relevance is quantified via a non-negative function $R_j \ge 0$ defined on the space of feature-response relationships, where larger values correspond to greater relevance. In this case, the null importance property is characterized by:
\[
\mathcal{N}_j \iff R_j = 0.
\]

The remainder of this section introduces several commonly used notions of null importance arising from different criteria of scientific relevance.

\subsection{Functional Null Importance}

Functional relevance is determined entirely by a prediction function $f: \mathcal{X} \rightarrow \mathcal{Y}$ and does not depend on a data distribution. This notion underlies gradient-based attribution methods such as saliency maps~\cite{SimonyanVedaldiZisserman2014Saliency}, Integrated Gradients~\cite{SundararajanTalyYan2017IG}, and related interpretability approaches~\cite{BachEtAl2015LRP, RibeiroSinghGuestrin2016LIME, SelvarajuCogswellEtAl2017GradCAM, ShrikumarGreensideKundaje2017DeepLIFT, SmilkovEtAl2017SmoothGrad}.

Let $\mathcal{X} = \mathcal{X}_1 \times \cdots \times \mathcal{X}_p$, and let $\mathcal{X}_{-j} = \prod_{k \neq j} \mathcal{X}_k$ denote the subspace excluding the $j$-th feature, where $x_{-j} \in \mathcal{X}_{-j}$ represents a vector $x \in \mathcal{X}$ omitting its $j$-th coordinate $x_j$. For simplicity of notation we restrict to the setting of prediction and binary classification. Natural extensions to multi-class classification, generation and other settings exist.

\begin{definition}[Global Functional Null]
\label{defn:global_functional}
A feature $X_j$ satisfies the \emph{global functional null property}, denoted by $\mathcal{N}_j^{\text{func-global}}$, if the prediction function $f$ does not depend on $x_j$ anywhere on $\mathcal{X}$. That is, there exists a function $g: \mathcal{X}_{-j} \rightarrow \mathcal{Y}$ such that
\[
f(x) = g(x_{-j}) \quad \text{for all } x \in \mathcal{X}.
\]
In terms of our general framework, this corresponds to
$R_j^{\text{global}}(f) = 0$, where
\[
R_j^{\text{global}}(f)
\equiv
\sup_{\substack{x,x' \in \mathcal{X}\\ x_{-j}=x'_{-j}}}
|f(x)-f(x')|.
\]
\end{definition}

\begin{definition}[Local Functional Null]
\label{defn:local_functional}
Let $x \in \mathcal{X}$. A feature $X_j$ satisfies the \emph{local functional null property at $x$}, denoted by $\mathcal{N}_j^{\text{func-local}}(x)$, if there exists an open neighborhood $U(x) \subseteq \mathcal{X}$ and a function $g: \mathcal{X}_{-j} \rightarrow \mathcal{Y}$ such that
\[
f(z) = g(z_{-j}) \quad \text{for all } z \in U(x).
\]
Equivalently, the prediction function is locally invariant to perturbations of $x_j$ near $x$, corresponding to $R_j^{\text{local}}(f; x) = 0$ over $U(x)$.
\end{definition}

\begin{definition}[Pointwise Functional Null]
Suppose $\mathcal{X} \subseteq \mathbb{R}^p$ and $f$ is differentiable at $x \in \mathcal{X}$. A feature $X_j$ satisfies the \emph{pointwise functional null property at $x$}, denoted by $\mathcal{N}_j^{\text{func-pointwise}}(x)$, if
\[
\frac{\partial f(x)}{\partial x_j} = 0.
\]
Here, the local relevance measure reduces to $$R_j^{\text{pointwise}}(f; x) = \left| \frac{\partial f(x)}{\partial x_j} \right| = 0.$$
\end{definition}

\begin{remark}
The three notions above capture functional relevance across different spatial scales. Under appropriate smoothness conditions on $\mathcal{X}$ and $f$, these properties exhibit a natural hierarchy for all $x \in \mathcal{X}$:
\[
\mathcal{N}_j^{\text{func-global}} \implies \mathcal{N}_j^{\text{func-local}}(x) \implies \mathcal{N}_j^{\text{func-pointwise}}(x).
\]
\end{remark}
\subsection{Statistical Null Importance}

In probabilistic settings where $(X, Y)$ is governed by a joint distribution $P(X, Y)$, statistical null importance formalizes feature relevance using probability theory and information theory. We focus on two primary model-agnostic notions: \emph{marginal} and \emph{conditional} statistical null importance.

\begin{definition}[Statistical Marginal Null Importance]
Feature $X_j$ satisfies the \emph{marginal statistical null property}, denoted by $\mathcal{N}_j^{\mathrm{stat-marg}}$, if its marginal mutual information with the response vanishes:
\[
R_j^{\mathrm{marg}} \equiv I(X_j; Y) = 0.
\]
Equivalently, $X_j$ is statistically independent of $Y$, denoted by $Y \perp X_j$.
\end{definition}

Conversely, $X_j$ is marginally relevant whenever we have $R_j^{\mathrm{marg}} > 0$. Methods such as correlation screening~\cite{FanLv2008SIS, FanSong2010SISGLM}, mutual information screening~\cite{PengLongDing2005mRMR}, and nonparametric independence measures~\cite{Gretton2005KernelIndependence, Gretton2008HSIC, SzékelyRizzoBakirov2007DistanceCovariance} directly evaluate or estimate this marginal quantity.

\begin{definition}[Statistical Conditional Null Importance]
\label{defn:cond_stat}
Feature $X_j$ satisfies the \emph{conditional statistical null property}, denoted by $\mathcal{N}_j^{\mathrm{stat-cond}}$, if its conditional mutual information with the response given all other features vanishes:
\[
R_j^{\mathrm{cond}} \equiv I(X_j; Y \mid X_{-j}) = 0.
\]
Equivalently, $X_j$ is conditionally independent of $Y$ given $X_{-j}$, denoted by $Y \perp X_j \mid X_{-j}$.
\end{definition}

Thus, $X_j$ is conditionally null if it carries no incremental information about $Y$ beyond what is already contained in $X_{-j}$. Conversely, $X_j$ is conditionally relevant whenever $R_j^{\mathrm{cond}} > 0$. 

Conditional statistical relevance is among the most widely adopted notions of feature importance in statistical learning~\cite{akaike1974new, Tibshirani1996Lasso, kohavi1997wrappers, guyon2003introduction}. It serves as the foundational property behind conditional independence testing~\cite{ShahPeters2020GCM, BerrettSamworthYuan2021CIT} and distribution-free feature selection frameworks such as Model-X Knockoffs~\cite{BarberCandes2015Knockoffs, CandèsFanJansonLv2018Knockoffs}.

\subsection{Risk-Based Null Importance}

Risk-based relevance quantifies the intrinsic predictive value of a feature with respect to a designated hypothesis class and loss function. This perspective is closely related to predictive feature importance measures such as LOCO~\cite{LeiZhuWasserman2018LOCO}, permutation importance~\cite{Breiman2001RandomForests, FisherRudinDominici2019}, and predictive model reliance~\cite{FisherRudinDominici2019}.

Let $L: \mathcal{Y} \times \mathcal{Y} \rightarrow \mathbb{R}_{\ge 0}$ denote a non-negative loss function, and let 
\[
\mathcal{R}(f) \equiv \mathbb{E}_{(X,Y) \sim P}\left[ L(Y, f(X)) \right]
\]
denote the population risk of a prediction function $f: \mathcal{X} \rightarrow \mathcal{Y}$.

Let $\mathcal{F} \subseteq \{f: \mathcal{X} \rightarrow \mathcal{Y}\}$ denote a class of prediction functions over the full feature space $\mathcal{X}$, and let $\mathcal{F}_{-j} \subseteq \{g: \mathcal{X}_{-j} \rightarrow \mathcal{Y}\}$ denote the restricted class operating on $\mathcal{X}_{-j}$. We define the optimal achievable risks as
\[
R^* \equiv \inf_{f \in \mathcal{F}} \mathcal{R}(f) \quad \text{and} \quad R^*_{-j} \equiv \inf_{g \in \mathcal{F}_{-j}} \mathbb{E}\left[ L(Y, g(X_{-j})) \right].
\]
Assuming $\mathcal{F}_{-j}$ is embedded in $\mathcal{F}$ via the inclusion $f(x) = g(x_{-j})$, it holds that $R^*_{-j} \ge R^*$.

\begin{definition}[Risk-Based Relevance Measure]
\label{defn:risknull}
The risk-based relevance of feature $X_j$ relative to the tuple $(P, \mathcal{F}, L)$ is defined as
\[
R_j^{\mathrm{risk}}(\mathcal{F}, L) \equiv R^*_{-j} - R^*.
\]
\end{definition}

\begin{definition}[Risk-Based Null Importance]
Feature $X_j$ satisfies the \emph{risk-based null property}, denoted by $\mathcal{N}_j^{\mathrm{risk}}$, if its risk-based relevance vanishes:
\[
\mathcal{N}_j^{\mathrm{risk}} \iff R_j^{\mathrm{risk}}(\mathcal{F}, L) = 0 \iff R^*_{-j} = R^*.
\]
\end{definition}

Conversely, $X_j$ is risk-relevant whenever $R_j^{\mathrm{risk}}(\mathcal{F}, L) > 0$, meaning that excluding $X_j$ strictly deteriorates the population risk achievable by the model family $\mathcal{F}$.

This notion forms the basis of many predictive feature importance and feature selection procedures, including LOCO and related predictive relevance methods.

\subsection{Causal Null Importance}

Causal relevance evaluates whether an interventional perturbation of $X_j$ alters the distribution of $Y$. This notion is rooted in the causal inference literature and intervention-based reasoning~\cite{Pearl2009Causality, peters2017elements, HeskesEtAl2020CausalShapley, JanzingMinoricsBlobaum2020CausalFeatureRelevance, KocaogluEtAl2018CausalGAN}.

Let $P(Y \mid \operatorname{do}(X_j = x_j))$ denote the interventional response distribution under Pearl's $do$-calculus, and let $\mathcal{P}_{\mathcal{Y}}$ denote the space of probability measures on $\mathcal{Y}$.
Let $D: \mathcal{P}_{\mathcal{Y}} \times \mathcal{P}_{\mathcal{Y}} \rightarrow \mathbb{R}_{\ge 0}$ be a statistical distance or divergence on probability measures (e.g., Total Variation distance, Kullback-Leibler divergence, or Wasserstein distance) satisfying $D(P_1, P_2) = 0 \iff P_1 = P_2$.

\begin{definition}[Causal Relevance Measure]
Causal relevance of feature $X_j$ may be defined as the supremum interventional divergence over all pairs of feature values $x_j, x_j' \in \mathcal{X}_j$,
\begin{align*}
R_j^{\mathrm{causal}} \equiv \sup_{x_j, x_j' \in \mathcal{X}_j} D\big(P(Y \mid \operatorname{do}(X_j = x_j)), \\ P(Y \mid \operatorname{do}(X_j = x_j')) \big).
\end{align*}
\end{definition}

\begin{definition}[Causal Null Importance]
Feature $X_j$ satisfies the \emph{causal null property}, denoted by $\mathcal{N}_j^{\mathrm{causal}}$, if its causal relevance vanishes:
\[
\mathcal{N}_j^{\mathrm{causal}} \iff R_j^{\mathrm{causal}} = 0.
\]
Equivalently, $P(Y \mid \operatorname{do}(X_j = x_j)) = P(Y \mid \operatorname{do}(X_j = x_j'))$ for all $x_j, x_j' \in \mathcal{X}_j$, rendering the interventional response distribution strictly invariant to $x_j$. Conversely, $X_j$ is causally relevant whenever $R_j^{\mathrm{causal}} > 0$.
\end{definition}

This notion captures scientific questions concerning causal influence and intervention effects rather than statistical association or predictive utility.

\subsection{Example: Algorithmic Fairness as Null Importance}

The null importance framework provides a unifying mathematical language for
algorithmic fairness. Rather than viewing different fairness criteria as
fundamentally different mathematical objects, we interpret each criterion as
specifying a notion of relevance under which the protected attribute should be
irrelevant. This perspective complements existing taxonomies of algorithmic
fairness
\cite{caton2024fairness, dwork2012fairness,VermaRubin2018FairnessDefinitions}
by emphasizing the underlying notion of null importance.

Let $A$ denote a protected attribute, let $X=(X_1,\ldots,X_p)$ denote the
remaining observed features, let $Y$ denote the true outcome, and let
$\hat Y=f(X,A)$ denote the prediction of a learned model.
In this setting, the relevance of the attribute $A$ is  characterized by a
non-negative relevance functional
\[
R_A \ge 0,
\]
with the corresponding null importance property $\mathcal N_A$.

Algorithmic fairness is therefore naturally formulated as requiring that the
protected attribute satisfy an appropriate null importance property. Different
fairness definitions correspond to different choices of relevance functional.

\paragraph{Marginal Statistical Null Importance}

Demographic parity
\cite{dwork2012fairness,barocas2016big}
requires that the prediction be statistically independent of the protected
attribute over the population.

Define the marginal statistical relevance
\[
R_A^{\mathrm{marg}}(\hat Y)
=
I(A;\hat Y).
\]

The corresponding fairness criterion 
$R_A^{\mathrm{marg}}(\hat Y)=0$
is therefore exactly 
$A\perp \hat Y$.
Thus demographic parity is precisely the marginal statistical null importance
property applied to the protected attribute.

\paragraph{Conditional Statistical Null Importance}

Equalized odds
\cite{hardt2016equality}
requires predictions to be conditionally independent of the protected
attribute given the true outcome.

Define
\[
R_A^{\mathrm{cond}}(\hat Y\mid Y)
=
I(A;\hat Y\mid Y).
\]

The corresponding fairness criterion $R_A^{\mathrm{cond}}(\hat Y\mid Y)=0$ is $A\perp\hat Y\mid Y$. Similarly, equality of opportunity
\cite{hardt2016equality}
corresponds to
$R_A^{\mathrm{cond}}(\hat Y\mid Y=1)=0$
and hence only requires conditional independence among positively labeled examples.

\paragraph{Proxy Fairness}

Rather than asking whether predictions depend on the protected attribute,
proxy fairness asks whether an observed feature uniquely encodes protected
information
\cite{datta2017proxy,kilbertus2017avoiding,veale2018fairness}.

For feature $X_j$, define
\[
R_j^{\mathrm{cond}}(A)
=
I(X_j;A\mid X_{-j}).
\]

The corresponding null importance property is $R_j^{\mathrm{cond}}(A) \allowbreak = 0.$
Thus proxy detection is itself a conditional statistical null importance
problem, with the protected attribute serving as the response variable.

\paragraph{Local Functional Null Importance}

Individual fairness
\cite{dwork2012fairness}
is a local property of the prediction function rather than a population-level
property. Informally, similar individuals should receive similar predictions.

Suppose $f$ is differentiable. A natural local functional relevance measure is
\[
R_A^{\mathrm{local}}(x)
=
\left|
\frac{\partial f(x)}{\partial A}
\right|.
\]

The corresponding null importance property is $R_A^{\mathrm{local}}(x) \allowbreak = 0$.
More generally, local functional relevance may be characterized through local
invariance of the prediction function, as given in Definition \ref{defn:local_functional}. Consequently, individual fairness is naturally
interpreted as a local functional null importance criterion.

\paragraph{Global Causal Null Importance}

Counterfactual fairness
\cite{kusner2017counterfactual}
is defined through interventions on the protected attribute.

Define
\begin{align*}
R_A^{\mathrm{causal}}
=
\sup_{a,a'}
D\!\Big(
&P(\hat Y\mid \mathrm{do}(A=a)),\\
&P(\hat Y\mid \mathrm{do}(A=a'))
\Big),
\end{align*}

where $D(\cdot,\cdot)$ is any divergence satisfying
$D(P,Q)=0$ if and only if $P=Q$.

The corresponding null importance property is $R_A^{\mathrm{causal}}=0$, or equivalently,
\[
P(\hat Y\mid \mathrm{do}(A=a))
=
P(\hat Y\mid \mathrm{do}(A=a'))
\qquad
\forall\,a,a'.
\]

Thus counterfactual fairness is precisely a global causal null importance
criterion.

The preceding examples illustrate that many widely-used notions of algorithmic
fairness differ primarily in the notion of relevance they seek to eliminate
rather than in their mathematical structure
\cite{mehrabi2021survey,caton2024fairness,VermaRubin2018FairnessDefinitions}.
Within the null importance framework,

\begin{center}
\small
\begin{tabular}{@{}p{0.38\linewidth}p{0.5\linewidth}@{}}
\toprule
Fairness notion & Null importance interpretation\\
\midrule
Demographic parity & Marginal statistical\\
Equalized odds & Conditional statistical\\
Equality of opportunity & Conditional statistical (restricted outcome)\\
Proxy fairness & Conditional statistical (target = protected attribute)\\
Individual fairness & Local functional\\
Counterfactual fairness & Causal\\
\bottomrule
\end{tabular}
\end{center}
From this perspective, algorithmic fairness is fundamentally the problem of
determining under which notion of relevance the protected attribute should
become irrelevant. This is consistent with prior impossibility results ~\cite{chouldechova2017fair, kleinberg2016inherent} asserting that fairness measures can not simultaneously satisfy multiple fairness criteria. The choice of statistical, functional, predictive, or causal
relevance and whether relevance is assessed globally or locally determines the
appropriate mathematical formulation and corresponding fairness methodology.

\subsection{Example: Clarifying Perturbation Modeling via Null Importance}

Fairness is an unusually clean case. In most scientific problems, the right null is harder to find, and finding it is the scientific work itself. We illustrate this with genomic perturbation modeling, where recent debates about how to evaluate the models are, in large part, disagreements about which null to target \cite{vollenweider2026signalm, nicol2026spurious, liu2025effects}. These models are trained from perturbation sequencing data, where individual genes are knocked down and profiles of the transcriptome are recorded in response. When coupled with sufficiently powerful prediction models, such experiments can uncover the regulatory architecture of the genome. In this context, null importance clarifies the object that is being declared relevant, like the influence of a perturbation target $G$ on the entire response $Y$ or an individual perturbation-response edge $g \to j$, suggesting relevance objects $R_{G}$ and $R_{g \to j}$.
Further, by distinguishing between null notions, the framework resolves disagreements around metric choice, like whether an evaluation should be restricted to differentially expressed genes.

Let $\mathcal{G}$ represent the collection of perturbation targets, $\varnothing$ the unperturbed control, and $\mathcal G^{+}=\mathcal G\cup\{\varnothing\}$. Let $G \sim Q$ be a random perturbation from $\mathcal{G}^{+}$ and $A = \mathbf{1}\{G \neq \varnothing\}$ indicate that a cell has been perturbed. Suppose the transcriptome-wide response across $d$ genes is drawn according to $Y \sim P\left(\cdot \mid \mathrm{do}\left(G\right)\right)$. Let $f$ denote a perturbation prediction model, whose goal is to predict the response $Y$ given an observed perturbation $G$. First we consider notions of relevance useful in characterizing good metrics. 

\textbf{Metric Confounding}. The mere act of perturbing cells can affect the transcriptome, regardless of the choice of perturbed gene. This creates an artificial signal that can make a model appear effective even though it hasn't learned any perturbation-specific response; i.e., it learns a pattern shared across all perturbations that differentiates them from the unperturbed controls, but not the response to any particular perturbation. A model confounded in this way exhibits,
\begin{align*}
    I\left(A; f\left(G\right)\right) > 0, \text{ while } I\left(G; f\left(G\right) \vert A = 1\right) = 0
\end{align*}
where $I(.,.)$ refers to mutual information meaning the model has learned a perturbation effect but a gene-specific signal. From the null importance perspective, the critique is that existing models are evaluating the relevance of the existence of a perturbation $A$ rather than the specific perturbed gene $G$.

\textbf{Metric Baselines}. It is possible to set a baseline representing signal that can be captured by $f$ that is unrelated to the perturbation. Define the risk,
$$
\mathcal R(f)
=
\mathbb E_{G\sim Q}\,
\mathbb E_{Y\sim P(\cdot\mid \mathrm{do}(G))}
\left[L\left(Y,f(G)\right)\right]
$$
for loss function $L(.,.)$. Then, letting $\mathcal{F}_0$ represent all constant functions, the ``strong baseline'' recommendation of \cite{vollenweider2026signalm} can be written as computing
$$
R_G^{\mathrm{risk}}
= \inf_{f'\in\mathcal F_0}\mathcal R(f') - \mathcal{R}\left(f\right),
$$
and using $R_{G}^{\text{risk}}$ to indicate that the model has no per\-tur\-ba\-tion-specific performance improvements. These recommendations can be read as a model-level notion of relevance used to evaluate overall prediction performance, which is distinct from the relevance of specific perturbation-to-target edges $g \to j$. Next we consider how null importance can highlight these biological relationships. 

\textbf{Marginal Importance and DE filtering}. Methods like SCEPTRE \cite{barry2021sceptre} identify marginal associations between response gene $j$ and the fixed knockdown target $g$,
$$
R_{g\to j}^{\mathrm{marg}}
=
D\!\left(
P(Y_j\mid \mathrm{do}(g)),\,
P(Y_j\mid\varnothing)
\right)
$$
where $D(.,.)$ refers to KL-divergence. Typically the perturbation interventions are assumed to satisfy the conditions of Lemma \ref{LemMargCausal} below, allowing for this causal quantity to be evaluated from marginal statistics.

Some authors have advocated for restricting evaluation to a subset of genes that pass a differentially expression (DE) filter \cite{mejia2025}. This corresponds to ignoring edges in a null set defined as edges where the relevance is too low or the target gene is not sufficiently differentially expressed,
$$
\mathcal N^{\varepsilon}_{g\to j}
\iff
R_{g\to j}^{\mathrm{marg}}<\varepsilon,
$$
where $\varepsilon$ is a threshold associated with the power of the underlying differential test.  As the sample size grows, power increases, allowing us to reduce the DE threshold $\varepsilon \downarrow 0$ and returning us to the definition of null importance in the unfiltered case. DE filtering changes not only the set of genes under consideration but also the implicit null notion. It requires evaluation of edges that have already been judged nonnull under a marginal perturbation-response notion.

\textbf{Causal Importance and Drug Targeting}. Let $M$ denote the response of genes other than $j$, i.e., $M = Y_{-j}$ and define,
\begin{align*}
R^{\mathrm{causal}}_{g\to j}
= \sup_{m}\, D\Big( &P(Y_j\mid \mathrm{do}(g), M{=}m), \\
&P(Y_j\mid\varnothing, M{=}m) \Big),
\end{align*}as the direct effect of perturbing gene $g$ on gene $j$ and ignoring effects mediated through all other genes. In this case an edge is null if and only if 
$$
P(Y_j\mid \mathrm{do}(g),M{=}m)=P(Y_j\mid\varnothing,M{=}m) \text{ for all }m
$$
where conditioning on all mediator genes removes the possibility of backdoor confounding~\cite{Pearl2009Causality}. This perspective distinguishes core vs. peripheral genes in the omnigenic model \cite{Ota2025} and is arguably more scientifically aligned with the goal of drug targeting. Indeed a downstream gene $j$ may be correlated with $g$, and this relationship may be learned by the black box $f$ without $g$ being an effective target for modifying $j$'s activity. A perturbation $g$ may be nonnull under $R_{g\to j}^{\text{marg}}$ if changes the response $j$ downstream, but it still may fail to be an effective intervention target in the sense $R_{g \to j}^{\text{causal}} = 0$ if it doesn't impact the response $j$ directly.

These examples highlight that both metrics and methods in perturbation modeling can disagree because they consider different objects to call relevant (e.g., $A$ vs. $G$) or because they target different notions of null importance ($R^{\text{marginal}}_{g \to j}$ vs. $R^{\text{causal}}_{g \to j}$). The language of null importance offers a simple way of organizing discussion around these subtle scientific issues.

\section{Equivalences and counter-examples for null importance}
\label{sec:equivalences_and_counterexamples}

Once a scientific notion of relevance has been specified, the next question is what can be inferred from the available data. Different notions of null importance are generally distinct, but they can coincide under particular assumptions about the data-generating process, the dependence structure among features, the predictive function class, or the relationship between observational and interventional distributions.

This section therefore asks a different question from the previous one. Section~2 specified \emph{what we mean by relevance}; here we ask \emph{when different meanings of relevance lead to the same conclusion}. These equivalences are useful because they explain when a method developed for one notion of relevance can legitimately be interpreted in terms of another. The counterexamples are equally important: they identify common settings in which such interpretations fail. Furthermore, practitioners may be able to determine whether the assumptions imposed for equivalence are likely to be satisfied or whether scenarios proposed by the counter-examples are likely to arise in their data. We emphasize that the assumptions below are sufficient conditions for equivalence rather than universal requirements. 

\subsection{Equivalences and key sufficient assumptions}

\subsubsection{Global functional and statistical conditional null equivalence}

There is one idealized setting in which functional and conditional statistical notions of importance coincide. This result illustrates that, under a deterministic and sufficiently regular data-generating mechanism with a non-redundant feature representation, the two perspectives identify the same null features. In practice, departures from these assumptions can cause the two notions to diverge.

\begin{lemma}[Functional--conditional statistical null equivalence]
\label{LemFuncStat}
Let $X = (X_j,X_{-j}) \sim P_X$ and suppose that
\[
Y = f(X)
\]
almost surely. Assume that:

\begin{enumerate}
    \item \textbf{Deterministic outcome:} $Y=f(X)$ almost surely.

    \item \textbf{Functional regularity:} For $P_{X_{-j}}$-almost every $x_{-j}$, the function
    \[
    x_j \mapsto f(x_j,x_{-j})
    \]
    is absolutely continuous on the support of $X_j \mid X_{-j}=x_{-j}$.

    \item \textbf{Rich conditional support:} For $P_{X_{-j}}$-almost every $x_{-j}$, the conditional distribution of $X_j$ given $X_{-j}=x_{-j}$ has a density that is positive almost everywhere on its support.

    \item \textbf{Non-redundancy:} The conditional distribution of $X_j$ given $X_{-j}$ is non-degenerate almost surely.
\end{enumerate}

Then
\[
\mathcal{N}_j^{\mathrm{func-global}}
\,\, P_X\text{-almost everywhere}
\,\,\Longleftrightarrow\,\,
\mathcal{N}_j^{\mathrm{stat-cond}}.
\]
\end{lemma}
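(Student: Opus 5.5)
Both directions rest on a single observation: when $Y=f(X)$ almost surely, conditioning on $X_{-j}$ leaves $Y$ a deterministic function of $X_j$ alone. Conditional independence of $Y$ and $X_j$ given $X_{-j}$ then says exactly that this function is almost surely constant under the conditional law of $X_j$. I would interpret ``$\mathcal{N}_j^{\mathrm{func-global}}$ $P_X$-almost everywhere'' to mean that there is a measurable $g$ with $f(X)=g(X_{-j})$ $P_X$-almost surely. I would then prove the two implications separately.

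\emph{($\Rightarrow$)} Suppose $f(x)=g(x_{-j})$ for $P_X$-a.e.\ $x$. Then $Y=g(X_{-j})$ almost surely, so $Y$ is $\sigma(X_{-j})$-measurable up to null sets. For any bounded measurable $h,k$ we have
\[
\mathbb{E}[h(Y)k(X_j)\mid X_{-j}] = h(g(X_{-j}))\,\mathbb{E}[k(X_j)\mid X_{-j}],
\]
which gives $Y\perp X_j\mid X_{-j}$, or equivalently $I(X_j;Y\mid X_{-j})=0$. This direction needs none of assumptions 2--4.

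\emph{($\Leftarrow$)} Suppose $Y\perp X_j\mid X_{-j}$, and fix a regular conditional distribution $P_{X_j\mid x_{-j}}$. For $P_{X_{-j}}$-a.e.\ $x_{-j}$, the conditional law of $Y$ given $X_{-j}=x_{-j}$ is the pushforward of $P_{X_j\mid x_{-j}}$ under $\phi_{x_{-j}}(t)=f(t,x_{-j})$. Conditional independence says that $Y$ and $X_j$ are independent under this conditional law. But $Y=\phi_{x_{-j}}(X_j)$ there, and a random variable that is a function of $X_j$ and independent of $X_j$ is independent of itself, hence degenerate. So $\phi_{x_{-j}}$ is $P_{X_j\mid x_{-j}}$-a.s.\ equal to a constant $c(x_{-j})$. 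To make this rigorous I would take $Y$ real-valued and use the identity $\mathbb{E}[(Y-\mathbb{E}[Y\mid X_{-j}])^2\mid X_{-j}]=\mathrm{Var}(Y\mid X_{-j})$. Independence of $Y$ from $X_j$ conditionally on $X_{-j}$, together with $Y=\phi(X_j)$, gives $\mathrm{Var}(Y\mid X_{-j})=\mathrm{Cov}(Y,\phi(X_j)\mid X_{-j})=0$ (a truncation handles integrability). Set $g(x_{-j})=c(x_{-j})$; one version is $\mathbb{E}[Y\mid X_{-j}=x_{-j}]$ for bounded truncations, which makes $g$ measurable. Then $f(X)=g(X_{-j})$ almost surely.

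The main obstacle is the gap between ``a.s.\ constant in $x_j$ under the conditional law'' and genuine pointwise non-dependence on $x_j$. Assumptions 2--4 exist to close it. Assumption 3 says the conditional law has a density that is positive a.e.\ on its support, so ``a.s.\ under $P_{X_j\mid x_{-j}}$'' is the same as ``Lebesgue-a.e.\ on the support.'' Assumption 2 then upgrades this: an absolutely continuous function on an interval that equals a constant on a set of full Lebesgue measure is constant everywhere, by continuity and density of full-measure sets. This removes the ``almost everywhere'' on each fiber and gives $f(x_j,x_{-j})=g(x_{-j})$ for every $x_j$ in the conditional support. I would flag that the support should be treated component-wise (connected intervals) for this step. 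Assumption 4 ensures the conclusion is non-vacuous: a degenerate $X_j\mid X_{-j}$ would make the conditional null hold trivially while $f$ could still vary in $x_j$ off the support. With the upgrade in hand, the equivalence holds $P_X$-almost everywhere, as stated.
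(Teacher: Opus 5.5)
Your proof is correct, but it takes a different route from the paper's because you read the functional null differently.

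You take ``$\mathcal{N}_j^{\mathrm{func-global}}$ $P_X$-almost everywhere'' literally as $f(X)=g(X_{-j})$ a.s. The lemma then reduces to the fact that, when $Y=f(X)$, $Y\perp X_j\mid X_{-j}$ holds iff $Y$ is a.s.\ $\sigma(X_{-j})$-measurable. Neither direction of your argument uses hypotheses 2--4; your upgrade to every $x_j$ in the conditional support is extra. Your conditional-variance argument also justifies a step the paper only asserts: that conditional independence forces $f(X_j,x_{-j})$ to be conditionally a.s.\ constant.

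The paper instead works with $\partial f/\partial x_j=0$ $P_X$-a.e. Hypotheses 2--3 are exactly what pass between a derivative vanishing Lebesgue-a.e.\ on each fiber and constancy in $x_j$: the fundamental theorem of calculus for absolutely continuous functions gives one direction, continuity the other. That reading links the conditional null to the gradient-based nulls behind saliency and Integrated Gradients, and it is why the hypotheses appear. Your argument does not deliver it.

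Your upgrade does give the converse half of the derivative version (constant on the support, hence zero derivative at interior points). The forward half, however, needs each conditional support to be an interval, and this is where your component-wise remark matters. Take $X_j\mid X_{-j}$ uniform on $[0,1]\cup[2,3]$ and $f=\mathbf{1}\{x_j>3/2\}$. Then $f$ is constant on each component, all four hypotheses hold, and $\partial f/\partial x_j=0$ everywhere on the support, yet $Y\not\perp X_j\mid X_{-j}$. Under your reading no connectivity is needed, because a single constant $c(x_{-j})$ is forced on all components at once.
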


\begin{proof}
First suppose that
\[
\frac{\partial f(x)}{\partial x_j}=0
\quad P_X\text{-almost everywhere}.
\]
By the rich conditional support assumption, this implies that
\[
\frac{\partial f(x_j,x_{-j})}{\partial x_j}=0
\]
for Lebesgue-almost every $x_j$ and $P_{X_{-j}}$-almost every $x_{-j}$. By absolute continuity, it follows that, for $P_{X_{-j}}$-almost every $x_{-j}$,
\[
f(x_j,x_{-j}) = g(x_{-j})
\]
for almost every $x_j$, for some measurable function $g$. Hence
\[
Y=g(X_{-j})
\]
almost surely. Therefore,
\[
Y \perp X_j \mid X_{-j}.
\]

Conversely, suppose that
\[
Y \perp X_j \mid X_{-j}.
\]
Since $Y=f(X)$ deterministically, for $P_{X_{-j}}$-almost every $x_{-j}$,
\[
f(X_j,x_{-j})
\]
is almost surely constant under the conditional distribution of $X_j$ given $X_{-j}=x_{-j}$. By the rich conditional support assumption, this implies that
\[
f(x_j,x_{-j}) = g(x_{-j})
\]
for Lebesgue-almost every $x_j$ and $P_{X_{-j}}$-almost every $x_{-j}$. Absolute continuity then implies
\[
\frac{\partial f(x_j,x_{-j})}{\partial x_j}=0
\]
for Lebesgue-almost every $(x_j,x_{-j})$, and hence
\[
\frac{\partial f(x)}{\partial x_j}=0
\quad P_X\text{-almost everywhere}.
\]
\end{proof}

The two key assumptions are \textbf{determinism} and \textbf{non-redundancy}. The deterministic assumption rules out outcome noise. Non-redundancy prevents conditional independence from arising simply because $X_j$ is already determined by $X_{-j}$. Conditional independence provides the standard statistical formalization of irrelevance of $X_j$ given $X_{-j}$; see, for example
~\cite{dawid1979conditional}. Under the additional deterministic and functional
regularity assumptions stated, this statistical notion can be
equated with a functional notion based on the partial derivative.

Based on the \textbf{determinism} and \textbf{non-redundancy} assumptions, the key question a practitioner must ask is whether the response exactly satisfies the functional relationship across samples (determinism) and whether the feature set can be compressed (non-redundancy) due to dependence.

\subsubsection{Risk-based and conditional statistical null equivalence}

Risk-based null importance is closely related to conditional statistical null importance, but the equivalence depends critically on the richness of the predictive function class and on the loss function. In particular, squared loss identifies the conditional mean, whereas a strictly proper scoring rule for the full conditional distribution identifies the entire conditional distribution.

\begin{lemma}[Risk--conditional statistical null equivalence]
\label{LemRiskNull}
Let $(X_j,X_{-j},Y) \sim P$. Let $\mathcal{F}$ be a class of predictive distributions for $Y$ given $X$, and let $\ell$ be a strictly proper scoring rule. Define
\[
R(f) = \mathbb{E}[\ell(Y,f(X))].
\]
Let $\mathcal{F}_{-j} \subseteq \mathcal{F}$ denote the subclass of predictors that depend only on $X_{-j}$. Assume:

\begin{enumerate}
    \item \textbf{Bayes completeness:} The true conditional distribution $P_{Y\mid X}$ belongs to $\mathcal{F}$.

    \item \textbf{Strict properness:} The population risk is uniquely minimized, up to $P_X$-almost sure equality, by the true conditional distribution $P_{Y\mid X}$.

    \item \textbf{Restriction completeness:} If
    \[
    \inf_{f\in\mathcal{F}} R(f)
    =
    \inf_{f\in\mathcal{F}_{-j}} R(f),
    \]
    then the Bayes-optimal conditional distribution admits a representation in $\mathcal{F}_{-j}$.
\end{enumerate}

Then
\[
Y \perp X_j \mid X_{-j}
\quad\Longleftrightarrow\quad
\inf_{f\in\mathcal{F}} R(f)
=
\inf_{f\in\mathcal{F}_{-j}} R(f).
\]
\end{lemma}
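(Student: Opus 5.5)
The plan is to prove the two directions separately. The forward direction uses Bayes completeness and the fact that a conditionally irrelevant feature can be dropped without changing the conditional law. The reverse direction uses restriction completeness together with uniqueness of the minimizer from strict properness. Throughout, predictors in $\mathcal{F}$ are identified up to $P_X$-almost sure equality, and $\mathcal{F}_{-j}$ is regarded as a subset of $\mathcal{F}$ via $f(x)=g(x_{-j})$. This embedding gives $\inf_{\mathcal{F}} R \le \inf_{\mathcal{F}_{-j}} R$ for free.

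\emph{($\Rightarrow$)} Suppose $Y \perp X_j \mid X_{-j}$. Then $P_{Y\mid X=(x_j,x_{-j})} = P_{Y\mid X_{-j}=x_{-j}}$ for $P_X$-almost every $x$, so the true conditional law $f^\star := P_{Y\mid X}$ is a.s.\ a function of $x_{-j}$ alone. Bayes completeness gives $f^\star\in\mathcal{F}$, and strict properness gives $R(f^\star)=\inf_{\mathcal{F}}R$. Because $f^\star$ depends only on $x_{-j}$, I will argue that it lies in $\mathcal{F}_{-j}$. This is immediate if $\mathcal{F}_{-j}$ is read as ``all members of $\mathcal{F}$ depending only on $x_{-j}$'', which is the wording of the lemma. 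Hence $\inf_{\mathcal{F}_{-j}}R\le R(f^\star)=\inf_{\mathcal{F}}R$, and combined with the reverse inequality the infima are equal. A small technical point is that $f^\star$ agrees with a function of $x_{-j}$ only a.s.; I will replace it by the version $x\mapsto P_{Y\mid X_{-j}=x_{-j}}$, which has the same risk.

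\emph{($\Leftarrow$)} Suppose the infima are equal. Restriction completeness gives some $g\in\mathcal{F}_{-j}$ with $g(x_{-j}) = P_{Y\mid X=x}$ for $P_X$-a.e.\ $x$. If restriction completeness were read more weakly, as only the existence of a minimizer in $\mathcal{F}_{-j}$, then that $g$ would attain $R(g)=\inf_{\mathcal{F}}R=R(f^\star)$. Strict properness (uniqueness up to $P_X$-a.s.\ equality) would then still force $g=P_{Y\mid X}$ a.s. In either reading, the conditional law of $Y$ given $X=(x_j,x_{-j})$ does not depend on $x_j$. Standard disintegration then shows that $P_{Y\mid X_{-j}=x_{-j}}$ equals this common law: integrate $P_{Y\mid X}$ over $X_j\mid X_{-j}$. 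This is exactly $Y\perp X_j\mid X_{-j}$.

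The main obstacle is not analytic but definitional. It consists of handling the a.s.\ identifications of predictors cleanly, and of pinning down what ``admits a representation in $\mathcal{F}_{-j}$'' means so that uniqueness from strict properness applies. I will state explicitly that risks are finite, so the infima are attained values and comparisons are meaningful. I will also invoke strict properness only in its integrated form, $R(f)\ge R(P_{Y\mid X})$ with equality iff $f=P_{Y\mid X}$ $P_X$-a.s., which is what assumption 2 provides.
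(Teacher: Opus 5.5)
Your proposal is correct and follows the same two-direction argument as the paper. In the forward direction, you place the Bayes-optimal law $P_{Y\mid X}$, which depends only on $x_{-j}$, into $\mathcal{F}_{-j}$ via Bayes completeness; in the converse, you read off $P_{Y\mid X}=g(X_{-j})$ from restriction completeness.

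Your extra bookkeeping only makes explicit what the paper leaves implicit. This includes the a.s.\ versions, the explicit use of properness to get $R(f^\star)=\inf_{\mathcal{F}}R$, and the observation that under a weaker reading of restriction completeness the uniqueness part of strict properness does the work. Note that finiteness of the risks does not by itself make the infima attained, but your argument never actually needs that.
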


\begin{proof}
Suppose first that
\[
Y \perp X_j \mid X_{-j}.
\]
Then
\[
P_{Y\mid X} = P_{Y\mid X_{-j}}
\]
almost surely. Thus the Bayes-optimal conditional distribution depends only on $X_{-j}$ and belongs to $\mathcal{F}_{-j}$. Therefore,
\[
\inf_{f\in\mathcal{F}} R(f)
=
\inf_{f\in\mathcal{F}_{-j}} R(f).
\]

Conversely, suppose that
\[
\inf_{f\in\mathcal{F}} R(f)
=
\inf_{f\in\mathcal{F}_{-j}} R(f).
\]
By restriction completeness, the Bayes-optimal conditional distribution can be represented as
\[
P_{Y\mid X} = g(X_{-j})
\]
for some measurable function $g$. Hence
\[
P_{Y\mid X} = P_{Y\mid X_{-j}}
\]
almost surely, which is equivalent to
\[
Y \perp X_j \mid X_{-j}.
\]
\end{proof}

The key assumption is \textbf{function-class richness}, expressed here through Bayes completeness and restriction completeness. If $\mathcal{F}$ is too restrictive, a feature may be conditionally relevant while its contribution cannot be represented by the chosen model class.

For example, under squared loss,
\[
R(f) = \mathbb{E}\left[(Y-f(X))^2\right],
\]
the Bayes predictor is
\[
f^*(X) = \mathbb{E}[Y\mid X].
\]
Thus equality of the optimal risks with and without $X_j$ identifies only the conditional-mean null
\[
\mathbb{E}[Y\mid X]
=
\mathbb{E}[Y\mid X_{-j}],
\]
which is weaker than conditional independence. Therefore, the equivalence in Lemma~\ref{LemRiskNull} requires a loss and function class capable of identifying the full conditional distribution.

The connection between predictive risk and conditional statistical
relevance follows from the theory of proper scoring rules. In particular,
strictly proper scoring rules uniquely identify the true conditional
distribution as the Bayes-optimal predictive distribution
\citep{gneiting2007strictly}. Consequently, when the predictive function
class is sufficiently rich to contain the true conditional distribution,
equality of the optimal risks with and without $X_j$ is equivalent to
conditional independence. Thus the key question a practitioner must ask is whether for the data being used, does the response satisfies a functional relationship that belongs within the function class induced by the predictive risk.

\subsubsection{Marginal statistical and causal null equivalence under randomization}

The connection between statistical and causal null importance requires an assumption that relates the observational and interventional distributions. Randomization provides a particularly transparent setting because it is a property of the feature assignment mechanism rather than an assumption about the null itself. Let $Y^{x_j}$ denote the potential outcome under the intervention $\mathrm{do}(X_j=x_j)$.

\begin{lemma}[Marginal Statistical and Causal Null Equivalence under Randomization]
\label{LemMargCausal}
Let $Y^{x_j}$ denote the potential outcome under the intervention
$\operatorname{do}(X_j=x_j)$. Suppose the following assumptions hold:

\begin{enumerate}
    \item \textbf{Randomization (exchangeability):} For every intervention
    value $x_j$ under consideration,
    \[
    Y^{x_j}\perp X_j.
    \]

    \item \textbf{Consistency:} If $X_j=x_j$, then
    \[
    Y=Y^{x_j}
    \qquad\text{almost surely}.
    \]

    \item \textbf{Positivity:} Every intervention value $x_j$ under
    consideration lies in the support of the randomization distribution of
    $X_j$.
\end{enumerate}

Then
\[
\mathcal N_j^{\mathrm{stat-marg}}
\quad\Longleftrightarrow\quad
\mathcal N_j^{\mathrm{causal}}.
\]
Equivalently,
\[
R_j^{\mathrm{marg}}=0
\quad\Longleftrightarrow\quad
R_j^{\mathrm{causal}}=0.
\]
\end{lemma}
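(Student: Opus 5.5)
The plan is to show that, under the three assumptions, the interventional distribution coincides with the observational conditional distribution:
\[
P(Y\mid \operatorname{do}(X_j=x_j)) = P(Y\mid X_j=x_j)
\]
for every intervention value $x_j$ under consideration. Once we have this identity, both nulls reduce to the same statement, namely that $x_j\mapsto P(Y\mid X_j=x_j)$ is constant.

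\textbf{Step 1 (identification).} We identify $P(Y\mid \operatorname{do}(X_j=x_j))$ with the law of the potential outcome $Y^{x_j}$. For any measurable set $B$, we chain the assumptions:
\[
P(Y^{x_j}\in B)=P(Y^{x_j}\in B\mid X_j=x_j)=P(Y\in B\mid X_j=x_j).
\]
The first equality uses randomization, $Y^{x_j}\perp X_j$. The second uses consistency. Positivity guarantees that the conditioning event (or, for continuous $X_j$, the regular conditional distribution at $x_j$) is well defined and lies in the support of $X_j$.

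\textbf{Step 2 (both directions).} First suppose $\mathcal N_j^{\mathrm{causal}}$ holds. Then $P(Y\mid \operatorname{do}(X_j=x_j))$ equals a fixed law $Q$ for all $x_j$. By Step 1, $P(Y\mid X_j=x_j)=Q$ for $P_{X_j}$-a.e.\ $x_j$. A conditional law that does not depend on the conditioning value is exactly independence, so $Y\perp X_j$, i.e.\ $I(X_j;Y)=0$.

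Conversely, suppose $I(X_j;Y)=0$. Then $P(Y\mid X_j=x_j)=P_Y$ for a.e.\ $x_j$ in the support. Step 1 then gives $P(Y\mid\operatorname{do}(X_j=x_j))=P_Y$ for all such $x_j$. Hence every pair of interventional laws agrees, and $R_j^{\mathrm{causal}}=0$ because $D(P_1,P_2)=0$ iff $P_1=P_2$.

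\textbf{Main obstacle.} The delicate point is measure-theoretic. Conditional distributions are only defined $P_{X_j}$-almost everywhere, whereas $R_j^{\mathrm{causal}}$ takes a supremum over \emph{all} $x_j$. I would handle this by reading the supremum as ranging over the intervention values under consideration in positivity, that is, the support of $X_j$. For continuous $X_j$, I would additionally fix a version of the regular conditional distribution that agrees with the interventional law, as delivered by consistency and randomization, so that the identity in Step 1 holds pointwise rather than only almost everywhere. In the discrete case positivity means $P(X_j=x_j)>0$, so no such issue arises. I would state the continuous case with this convention made explicit.
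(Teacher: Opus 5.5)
Your Steps 1 and 2 are the paper's proof. The paper uses the chain $P(Y\in B\mid X_j=x_j)=P(Y^{x_j}\in B\mid X_j=x_j)=P(Y^{x_j}\in B)$, obtained from consistency and randomization; you read the same chain in the other direction, which is immaterial. After that step, both nulls say that the single map $x_j\mapsto P(Y^{x_j})$ is constant. For discrete $X_j$ this is complete, provided positivity means $P(X_j=x_j)>0$ and the supremum in $R_j^{\mathrm{causal}}$ is restricted to the values under consideration.

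What does not hold up is your last paragraph, which claims a convention disposes of continuous $X_j$. In fact the lemma as literally stated is false there, for two separate reasons, and your patch addresses neither.

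(i) In the direction $\mathcal N_j^{\mathrm{stat-marg}}\Rightarrow\mathcal N_j^{\mathrm{causal}}$, independence pins down $P(Y^{x_j})$ only for $P_{X_j}$-a.e.\ $x_j$. Choosing a version of the conditional law cannot upgrade this, because $Y\perp X_j$ is itself an almost-everywhere property. Take $X_j\sim\mathrm{Unif}(0,1)$, $Y^{x}=\mathbf{1}\{x=1/2\}$ and $Y=Y^{X_j}$. All three assumptions hold and $Y=0$ almost surely. Yet $P(Y\mid\operatorname{do}(X_j=1/2))=\delta_1\neq\delta_0=P(Y\mid\operatorname{do}(X_j=1/4))$, although $1/2$ lies in the support.

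(ii) Step 1 is not actually delivered by the assumptions when $P(X_j=x_j)=0$, because your first equality and consistency then both concern a null event. Take $Y^{x}=x\,\mathbf{1}\{X_j=x\}$. For each fixed $x$ it equals $0$ almost surely, so it satisfies pointwise exchangeability and gives $\mathcal N_j^{\mathrm{causal}}$. It also satisfies consistency with $Y=Y^{X_j}=X_j$, and yet $Y\not\perp X_j$.

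A correct continuous statement needs two things. First, joint exchangeability $\{Y^{x}\}_{x}\perp X_j$ for a jointly measurable potential-outcome process; this gives $P(Y\mid X_j=x_j)=P(Y^{x_j})$ for a.e.\ $x_j$. Second, either an essential supremum in $R_j^{\mathrm{causal}}$ or continuity of $x_j\mapsto P(Y^{x_j})$, so that equality on a set of full measure extends to the whole support. The paper's proof tacitly treats the discrete case, so the same caveat applies to it.
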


\begin{proof}
Under randomization and consistency, for every intervention value $x_j$
under consideration,
\[
\begin{aligned}
P(Y\in A\mid X_j=x_j)
&=
P(Y^{x_j}\in A\mid X_j=x_j) \\
&=
P(Y^{x_j}\in A)
\end{aligned}
\]
for every measurable set $A$. The first equality follows from
consistency, while the second follows from randomization. Therefore,
\[
P(Y\mid X_j=x_j)
=
P(Y^{x_j})
=
P(Y\mid \operatorname{do}(X_j=x_j)).
\]
Thus, under randomization,
\[
P(Y\mid X_j=x_j)
=
P(Y\mid \operatorname{do}(X_j=x_j))
\]
for every intervention value $x_j$ under consideration.

Suppose first that
\[
\mathcal N_j^{\mathrm{stat-marg}}
\]
holds. Then
\[
P(Y\mid X_j=x_j)
=
P(Y\mid X_j=x_j')
\]
for all $x_j,x_j'$. By the identification equality above,
\[
P(Y\mid \operatorname{do}(X_j=x_j))
=
P(Y\mid \operatorname{do}(X_j=x_j'))
\]
for all $x_j,x_j'$. Hence
\[
\mathcal N_j^{\mathrm{causal}}
\]
holds.

Conversely, suppose that
\[
\mathcal N_j^{\mathrm{causal}}
\]
holds. Then
\[
P(Y\mid \operatorname{do}(X_j=x_j))
=
P(Y\mid \operatorname{do}(X_j=x_j'))
\]
for all $x_j,x_j'$. Again using the identification equality,
\[
P(Y\mid X_j=x_j)
=
P(Y\mid X_j=x_j')
\]
for all $x_j,x_j'$. Hence
\[
\mathcal N_j^{\mathrm{stat-marg}}
\]
holds.

Therefore,
\[
\mathcal N_j^{\mathrm{stat-marg}}
\quad\Longleftrightarrow\quad
\mathcal N_j^{\mathrm{causal}},
\]
or equivalently,
\[
R_j^{\mathrm{marg}}=0
\quad\Longleftrightarrow\quad
R_j^{\mathrm{causal}}=0.
\]
\end{proof}

The key assumption is \textbf{randomization}. Randomization makes the distribution of potential outcomes independent of the assigned value of $X_j$, thereby allowing the observational conditional distribution to identify the corresponding interventional distribution. Consistency connects the observed outcome to the relevant potential outcome, while positivity ensures that the interventions of interest are identifiable.

The equivalence between the statistical and causal nulls under
randomization follows from the standard potential-outcomes framework.
Randomization implies exchangeability of the potential outcomes and
treatment assignment, while consistency links the observed outcome to
the corresponding potential outcome; see~\cite{rubin1974estimating,hernan2023causal}.

\subsubsection{Marginal and conditional statistical nulls}

A natural question is whether marginal and conditional statistical null importance coincide when the features are independent. Feature independence is sufficient for one direction, but not for the reverse direction in general.

\begin{lemma}[Feature independence and marginal--conditional nulls]
\label{LemMargCond}
Suppose that
\[
X_j \perp X_{-j}.
\]
Then
\[
Y \perp X_j\mid X_{-j}
\quad\Longrightarrow\quad
Y \perp X_j.
\]
The converse does not hold in general.
\end{lemma}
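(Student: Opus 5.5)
The forward direction is a short integration argument. Write the joint law of $(Y,X_j,X_{-j})$ and assume $Y \perp X_j \mid X_{-j}$. Then for every measurable set $A$,
\[
P(Y\in A\mid X_j,X_{-j}) = P(Y\in A\mid X_{-j}) \quad\text{a.s.}
\]
The plan is to condition on $X_j$ alone by integrating out $X_{-j}$ via the tower property:
\[
P(Y\in A\mid X_j) = \mathbb{E}\big[P(Y\in A\mid X_{-j})\,\big|\, X_j\big].
\]
The right-hand side is a measurable function of $X_{-j}$ only, and $X_j\perp X_{-j}$. Hence its conditional expectation given $X_j$ equals its unconditional expectation, which is $P(Y\in A)$. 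Thus $P(Y\in A\mid X_j)=P(Y\in A)$ a.s. for all $A$, i.e.\ $Y\perp X_j$. Equivalently, one can phrase this with the graphoid axioms: $X_j\perp X_{-j}$ together with $X_j\perp Y\mid X_{-j}$ gives, by contraction, $X_j\perp (Y,X_{-j})$, and decomposition then yields $X_j\perp Y$. I would present the tower-property version, because it needs only regular conditional distributions and no density assumptions, and cite \cite{dawid1979conditional} for the axiomatic version.

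For the failure of the converse I will give an explicit counterexample in which the features are independent, $Y$ is marginally independent of $X_j$, but $Y$ depends on $X_j$ given $X_{-j}$. The natural choice is an XOR interaction. Take $p=2$, $j=1$, with $X_1,X_2$ i.i.d.\ Bernoulli$(1/2)$, and set $Y = X_1 \oplus X_2$ (addition mod 2). Then:
\begin{itemize}
\item $X_1\perp X_2$ holds by construction.
\item For each $x_1$, $Y\mid X_1=x_1$ is Bernoulli$(1/2)$, so $Y\perp X_1$ and $I(X_1;Y)=0$.
\item Given $X_2$, $Y$ is a deterministic bijection of $X_1$, so $I(X_1;Y\mid X_2)=\log 2>0$.
\end{itemize}
Hence $\mathcal N_1^{\mathrm{stat-marg}}$ holds while $\mathcal N_1^{\mathrm{stat-cond}}$ fails. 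A continuous analogue is $Y=X_1X_2$ with $X_1,X_2$ independent symmetric Rademacher, or $Y = X_1 X_2 + \varepsilon$ with $X_2$ mean-zero symmetric; I may mention it as a remark.

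I do not expect a real obstacle. The only care needed is measure-theoretic: stating the conditional independence via regular conditional probabilities, and noting that the displays hold almost surely. The independence step $\mathbb{E}[h(X_{-j})\mid X_j]=\mathbb{E}[h(X_{-j})]$ uses only $X_j\perp X_{-j}$ and integrability of the bounded function $h=P(Y\in A\mid X_{-j})$, so it goes through directly.
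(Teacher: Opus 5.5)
Your argument is correct and essentially the paper's: the paper likewise integrates $P(Y\in A\mid X_j=x_j,X_{-j}=x_{-j})=P(Y\in A\mid X_{-j}=x_{-j})$ against the conditional law of $X_{-j}$ given $X_j$, replaces that law by the marginal $P_{X_{-j}}$ using $X_j\perp X_{-j}$, and refutes the converse with the same XOR example. One caution on your optional remark: $Y=X_1X_2+\varepsilon$ with $X_2$ symmetric is marginally independent of $X_1$ only when $|X_1|$ is almost surely constant (e.g.\ $X_1$ Rademacher); for a genuinely continuous $X_1$ the conditional spread of $Y$ given $X_1=x_1$ scales with $|x_1|$, so you get only mean independence, not $Y\perp X_1$.
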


\begin{proof}
Suppose that
\[
Y \perp X_j\mid X_{-j}.
\]
Then
\begin{align*}
&P(Y\in A\mid X_j=x_j,X_{-j}=x_{-j})\\
=\,\,
&P(Y\in A\mid X_{-j}=x_{-j})
\end{align*}
for every measurable set $A$. Therefore,
\[
\begin{aligned}
&P(Y\in A\mid X_j=x_j)\\
=
&\int
P(Y\in A\mid X_j=x_j,X_{-j}=x_{-j}) dP_{X_{-j}\mid X_j=x_j}(x_{-j}) \\
=
&\int
P(Y\in A\mid X_{-j}=x_{-j})
\,dP_{X_{-j}}(x_{-j}),
\end{aligned}
\]
where the second equality follows from
\[
X_j\perp X_{-j}.
\]
The final expression does not depend on $x_j$. Hence
\[
Y\perp X_j.
\]
\end{proof}

The converse does not hold even when the features are independent. For example, let
\[
X_j,X_k\stackrel{\mathrm{ind}}{\sim}\operatorname{Bernoulli}(1/2)
\]
and define
\[
Y=X_j\oplus X_k,
\]
where $\oplus$ denotes the XOR operation. Then
\[
Y\perp X_j,
\]
but
\[
Y\not\perp X_j\mid X_k.
\]
Thus \textbf{feature independence} alone is not sufficient for equivalence of marginal and conditional statistical nulls. An additional \textbf{no-interaction} or \textbf{structural homogeneity} assumption is required for the reverse implication.

\subsection{Counterexamples: Separation of Null Importance Notions}

The previous section established conditions under which specific notions of null importance coincide. In this section, we construct several counterexamples illustrating strict separations between functional, statistical, risk-based, and causal notions of null importance.

\paragraph{Marginal vs Conditional Statistical Null}

Perhaps one of the most widely known difference in the statistics literature is the distinction between marginal and conditional null (see e.g.~\cite{dawid1979conditional}).

\begin{example}[Marginal null does not imply conditional null]
Let $X_1 \sim \mathrm{Bernoulli}(1/2)$ and define
\[
X_2 = X_1 \oplus \epsilon, \quad \epsilon \sim \mathrm{Bernoulli}(1/2),
\]
independent of $X_1$. Let $Y = X_1$.

Then:
\begin{itemize}
\item $I(X_2; Y) = 0$, so $X_2$ is marginally statistically null.
\item However, $I(X_2; Y \mid X_1) > 0$, so $X_2$ is not conditionally statistically null.
\end{itemize}

Thus, marginal and conditional null importance do not coincide in general.
\end{example}

\paragraph{Functional Null vs Conditional Statistical Null}
We provided an equivalence result in Lemma \ref{LemFuncStat} and also discussed conditions under which this equivalence breaks down. To flesh this out further, we provide two more examples where the Lemma's assumptions fail to hold.

\begin{example}[Redundant representation]
Let
\[
X_1 \sim \mathcal N(0,1),
\qquad
X_2 = X_1,
\]
and define
\[
Y = X_2.
\]

The prediction function can be written as
\[
f(x_1,x_2)=x_2,
\]
which does not depend on \(x_1\). Therefore there exists a function
\[
g(x_2)=x_2
\]
such that
\[
f(x_1,x_2)=g(x_2),
\]
and hence \(X_1\) satisfies the functional null property.

Moreover,
\[
Y=X_2,
\]
implies
\[
P(Y\mid X_1,X_2)
=
P(Y\mid X_2),
\]
and therefore
\[
Y \perp X_1 \mid X_2.
\]

Thus \(X_1\) is both functionally null and conditionally statistically null. This example illustrates how redundancy can create multiple equivalent representations of the prediction mechanism. Although the two notions of null importance coincide in this setting, the relevance of individual features becomes non-identifiable because the same predictive information is duplicated across multiple variables.
\end{example}

\begin{example}[Conditional statistical null without functional null]
\label{ex:conditional_not_functional}
Let
\[
X_1 \sim \mathcal{N}(0,1),
\qquad
X_2 = X_1,
\]
and define
\[
Y = \frac{1}{2}X_1 + \frac{1}{2}X_2.
\]

Since \(X_1 = X_2\), we have
\[
Y = X_2 = X_1.
\]

Consequently,
\[
P(Y \mid X_1,X_2)
=
P(Y \mid X_2),
\]
so that
\[
Y \perp X_1 \mid X_2.
\]
Similarly,
\[
Y \perp X_2 \mid X_1.
\]
Thus each variable satisfies the conditional statistical null property.

Now consider the prediction function
\[
f(x_1,x_2)
=
\frac{1}{2}x_1+\frac{1}{2}x_2.
\]
There does not exist a function \(g\) such that
\[
f(x_1,x_2)=g(x_2)
\qquad
\text{for all } (x_1,x_2)\in\mathbb{R}^2,
\]
since, for example,
\[
f(0,0)=0,
\qquad
f(2,0)=1,
\]
while both inputs have the same value of \(x_2\). Therefore \(X_1\) is not functionally null. By symmetry, \(X_2\) is also not functionally null.

This example illustrates that the conditional statistical null and functional null need not coincide due to redundancy in the feature set which is assumed away in Lemma~\ref{LemFuncStat}. The discrepancy arises because perfect redundancy makes the prediction mechanism statistically non-identifiable: either feature alone determines the response under the observational distribution, even though the chosen functional representation explicitly depends on both variables.
\end{example}

\paragraph{Risk-Based Null vs Conditional Statistical Null}

\begin{example}[Failure of Bayes-completeness]
\label{ex:failure_bayes}
Let
\[
X_1 \sim \mathrm{Unif}(-1,1),
\qquad
X_2 = X_1^2,
\]
and define
\[
Y = X_2 = X_1^2.
\]

Since \(X_2\) is a deterministic function of \(X_1\),
\[
P(Y \mid X_1,X_2)
=
P(Y \mid X_1),
\]
and therefore
\[
Y \perp X_2 \mid X_1.
\]
Thus \(X_2\) satisfies the conditional statistical null property.

Now consider the hypothesis class of linear predictors
\[
\mathcal{F}
=
\left\{
f(x_1,x_2)
=
\beta_0+\beta_1x_1+\beta_2x_2
:
\beta_0,\beta_1,\beta_2\in\mathbb{R}
\right\}.
\]

When both variables are available, the predictor
\[
f(x_1,x_2)=x_2
\]
belongs to \(\mathcal{F}\), so the optimal risk equals the Bayes risk,
\[
R^*=0.
\]

If \(X_2\) is removed, however, the restricted hypothesis class becomes
\[
\mathcal{F}_{-2}
=
\left\{
f(x_1)
=
\beta_0+\beta_1x_1
:
\beta_0,\beta_1\in\mathbb{R}
\right\},
\]
which cannot represent the quadratic function
\[
Y=X_1^2.
\]
Consequently,
\[
R^*_{-2}>R^*,
\]
so \(X_2\) is not risk-null.

\end{example}
This example demonstrates that conditional statistical null and risk null need not coincide when the hypothesis class is not Bayes-complete. Although \(X_2\) contains no information beyond \(X_1\), its explicit inclusion enables the restricted hypothesis class to represent the regression function exactly. In contrast, a Bayes-complete hypothesis class would allow the same predictor to be represented using \(X_1\) alone, making \(X_2\) risk-null as well. This nonlinear example is recurring scenario where methods suited to linear and nonlinear relationships differ.

\paragraph{Causal null vs Conditional Statistical Null}

\begin{example}[Causal importance without conditional statistical importance]
Let
\[
X \sim \mathcal{N}(0,1),
\]
and define the structural equations
\[
M = X,
\qquad
Y = M.
\]

The causal graph is
\[
X \longrightarrow M \longrightarrow Y.
\]

Since intervening on \(X\) changes \(M\), which in turn changes \(Y\), the variable \(X\) is not causally null.

On the other hand,
\[
P(Y\mid X,M)
=
P(Y\mid M),
\]
because \(Y\) depends only on the mediator \(M\). Hence
\[
Y \perp X \mid M,
\]
so \(X\) is conditionally statistically null.
\end{example}
This example illustrates that conditional statistical null does not imply causal null. Conditioning on a mediator blocks the indirect causal pathway, rendering the treatment statistically redundant for prediction even though it remains causally relevant.

\section{Characterizing feature analysis methods by null importance}
\label{sec}

The notions of null importance given in Section \ref{sec:null_notions} allow us to frame precise scientific questions about the role of a variable in a dataset, and Section \ref{sec:equivalences_and_counterexamples} gave conditions under which they coincide. We now turn to methods for establishing whether features are null. For each feature-analysis method, we ask two questions: (i) which notion of null importance does the method directly target, and (ii) does the population importance statistic produced by the method identify that null?
This distinction is important. A method may be motivated by a particular notion of relevance without its zero importance statistic identifying that notion. Further, a method may identify a null only under assumptions that connect its statistical target to its scientific context. We therefore distinguish between \emph{targeting} a null notion and \emph{identifying} that null. 

The resulting taxonomy is summarized in Table~\ref{tab:null_methods}. The table should be read as a guide to methodological alignment rather than as a ranking of methods. The appropriate choice depends on the scientific question, the available data, and the assumptions that are credible in the application. Since there are so many feature analysis methods, we focus on specific examples of feature attribution, feature selection and functional sensitivity methods. Methods not included may also be put through this taxonomy or framework.
This illustrates another key motivation of null importance which is to evaluate feature analysis methods. A method should be evaluated against the null notion it is intended to answer, not merely by the numerical behavior of its importance scores.

\subsection{Feature attribution methods}
\label{sec:feature_attribution_methods}

We begin with feature attribution methods, which assign to each feature an importance score $\phi_j$ intended to quantify its contribution to a specified value functional $V: 2^{p} \rightarrow \mathbb{R}$.

Shapley values provide a general axiomatic framework for such attribution (see e.g.~\cite{Shapley1953Value, LundbergLee2017SHAP, LundbergErionChen2019TreeSHAP, HeskesEtAl2020CausalShapley, FryeStrumke2019ShapleyRisk, CovertLundbergLee2020SAGE, CovertLee2021KernelSHAP}). Given a value functional $V$, the Shapley value for feature $j$ is defined as
\begin{equation*}
\phi_{j} =
\sum_{S\subseteq \{1,\ldots,p\} \setminus \{j\}}
\frac{1}{p\binom{p-1}{|S|}}
\left[
V(S \cup \{j\}) - V(S)
\right].
\end{equation*}

The Shapley values are uniquely characterized by the following axioms:

\begin{itemize}
    \item \textbf{Efficiency:} $\sum_{j=1}^p \phi_{j} = V(\{1,\ldots,p\}) - V(\emptyset)$.
    \item \textbf{Symmetry:} if two features contribute identically to $V$, then they receive equal attribution.
    \item \textbf{Null player:} if $V(S \cup \{j\}) = V(S)$ for all $S$, then $\phi_j = 0$.
    \item \textbf{Linearity:} Shapley values are linear in $V$.
\end{itemize}

The null player axiom suggests a connection to null importance; however, it is a uni-directional statement. A zero value for $\varphi_{j}$ does not imply zero contribution to $V$. Further, a key feature of Shapley values is that the interpretation of null importance depends entirely on the choice of value functional $V$.

\begin{itemize}
    \item $V(S) = \mathbb{E}[f(X) \mid \mathrm{do}(X_S = x_S)]$ corresponds to causal Shapley values.
    \item $V(S) = \mathbb{E}[f(X) \mid X_S = x_S]$ corresponds to observational/statistical Shapley values.
    \item $V(S) = \mathcal{R}(f_S)$ corresponds to risk-based Shapley values.
    \item $V(S) = f(x_S, x'_S)$ corresponds to functional/baseline attribution (e.g., Integrated Gradients-type constructions).
\end{itemize}

Each of these choices induces a different notion of relevance and therefore a different notion of null importance. We now analyze whether Shapley values satisfy strong or weak null properties under these different choices.

\paragraph{Risk-based Shapley}

For function $f$ we define the difference of risk:
\[
\mathcal{R}(f(X)) - \mathcal{R}(f_{-j}(X_{-j})).
\]

The corresponding Shapley value~\citep{CovertLundbergLee2020SAGE, FryeStrumke2019ShapleyRisk} is
\begin{align*}
\phi_{j}^{\mathrm{risk}} =
\sum_{S\subseteq [p]\setminus\{j\}}
\frac{1}{p\binom{p-1}{|S|}}
\Big[ &\mathcal{R}(f_S(X_S)) - \mathcal{R}(f_{S \cup j}(X_{S \cup j})) \Big].
\end{align*}
The \emph{targeted null} is a risk-based null importance (not identical to the original definition since $f$ is pre-defined):
\[
R_{-j} = R.
\]
However, in general Shapley values \emph{do not} identify the risk-based null. Indeed, the risk null concerns the marginal contribution of $X_j$ when all other features are available, whereas the Shapley value averages marginal contributions over all subsets. A feature can therefore be risk-null in the full model while making nonzero contributions to smaller coalitions.

For example, consider a chain
\[
X_1\longrightarrow X_2\longrightarrow X_3\longrightarrow Y,
\]
with
\[
X_2=X_1+\gamma,\qquad
X_3=X_2+\delta,\qquad
Y=X_3+\epsilon,
\]
where the noise variables are mutually independent. Once $X_3$ is available, $X_1$ provides no additional predictive information. Thus $X_1$ is risk-null relative to $X_3$. Nevertheless, $X_1$ contributes predictive information to coalitions that do not contain $X_3$, and therefore receives a nonzero Shapley attribution. The example illustrates the fundamental distinction between \emph{full-model leave-one-feature-out importance} and \emph{average coalition contribution}.

Consequently, risk-based Shapley should not be interpreted as a general test for risk-based null importance.

\paragraph{Causal Shapley}

Causal Shapley values \citep{HeskesEtAl2020CausalShapley, JanzingMinoricsBlobaum2020CausalFeatureRelevance} replace the observational distribution in the Shapley value function by an interventional distribution. In the \(do\)-based formulation, the value of a coalition \(S\) is
$$
V(S;x_S)
=
\mathbb{E}\!\left[
f(X)\mid \operatorname{do}(X_S=x_S)
\right].
$$

Thus, the corresponding Shapley value is

\begin{align*}
\phi_j^{\mathrm{causal}}(x)
&=
\sum_{S\subseteq -j}
\frac{1}{p\binom{p-1}{|S|}}
\Big[
V(S\cup\{j\};x_{S\cup\{j\}})
\nonumber\\
&\qquad\qquad\qquad
- V(S;x_S)
\Big].
\end{align*}

This construction targets an \emph{interventional functional} notion of relevance: feature \(X_j\) is causally null for the prediction function if intervening on \(X_j\) does not change the corresponding interventional expectation. In particular, define a variant of the causal null defined through expectation:
$$
\mathbb{E}\!\left[
f(X)\mid \operatorname{do}(X_j=x_j)
\right]
=
\mathbb{E}\!\left[
f(X)\mid \operatorname{do}(X_j=x_j')
\right]
$$
for all intervention values \(x_j,x_j'\) in the relevant support. The null-player property of the Shapley value gives the following one-way guarantee.

\begin{lemma}[Causal null implies zero causal Shapley]
\label{lem:causal_shapley_null}
Suppose that \(X_j\) is causally null in the sense that its intervention does not change the interventional value function for any coalition of the remaining features. Then
\begin{align*}
&\mathbb{E}\!\left[
f(X)\mid \operatorname{do}(X_j=x_j)
\right]
\,\,=\,\,
\mathbb{E}\!\left[
f(X)\mid \operatorname{do}(X_j=x_j')
\right]\\
&\Longrightarrow\quad
\phi_j^{\mathrm{causal}}=0.
\end{align*}

\end{lemma}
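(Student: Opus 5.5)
The proof reduces to the null player axiom. The only real work is to turn the hypothesis into the coalition-wise condition that axiom requires. The lemma's hypothesis is stated in words as the stronger, coalition-wise condition: for every $S\subseteq\{1,\ldots,p\}\setminus\{j\}$, intervening on $X_j$ in addition to $X_S$ does not change the interventional value. The displayed equality $\mathbb{E}[f(X)\mid \operatorname{do}(X_j=x_j)]=\mathbb{E}[f(X)\mid \operatorname{do}(X_j=x_j')]$ is the $S=\emptyset$ instance. I will therefore interpret the assumption as
\[
\mathbb{E}\!\left[f(X)\mid \operatorname{do}(X_S=x_S,X_j=x_j)\right]
=
\mathbb{E}\!\left[f(X)\mid \operatorname{do}(X_S=x_S)\right]
\]
for all $S\not\ni j$, all $x_S$, and all $x_j$ in the relevant support. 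In words, the interventional value function is constant in $x_j$ once any coalition is fixed.

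\textbf{Step 1: from invariance in $x_j$ to invariance under joining $j$.} I would first show that the coalition-wise invariance, stated as $\mathbb{E}[f(X)\mid \operatorname{do}(X_S=x_S,X_j=x_j)]$ being independent of $x_j$, yields $V(S\cup\{j\};x_{S\cup\{j\}})=V(S;x_S)$. The route is to average over the distribution that $X_j$ has under $\operatorname{do}(X_S=x_S)$. Write $V(S;x_S)=\int \mathbb{E}[f(X)\mid \operatorname{do}(X_S=x_S,X_j=u)]\,dP(u\mid \operatorname{do}(X_S=x_S))$. This is the adjustment identity for intervening on $X_j$ inside the post-intervention model $\operatorname{do}(X_S=x_S)$. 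The integrand does not depend on $u$, so it equals its value at $u=x_j$, which is $V(S\cup\{j\};x_{S\cup\{j\}})$.

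\textbf{Step 2: plug into the Shapley sum.} With $V(S\cup\{j\})-V(S)=0$ for every $S\subseteq\{1,\ldots,p\}\setminus\{j\}$ and every $x$, each summand in $\phi_j^{\mathrm{causal}}(x)$ vanishes. Hence $\phi_j^{\mathrm{causal}}(x)=0$, which is the null player axiom listed above.

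\textbf{Main obstacle.} The hard step is Step 1. The identity requires that $\operatorname{do}(X_S=x_S,X_j=u)$ be well defined and that the positivity/support caveat ("relevant support") cover the values $u$ charged by $P(X_j\mid \operatorname{do}(X_S=x_S))$. I would state this as part of the hypothesis. If only the $S=\emptyset$ equality is assumed, the implication can fail, for example through interactions that cancel marginally. So I would first read the hypothesis in its coalition-wise form; if the lemma is meant as stated, I would add a remark that the displayed equality alone is insufficient.
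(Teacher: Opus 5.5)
With the reading you actually display, $V(S\cup\{j\};x_{S\cup\{j\}})=V(S;x_S)$ for every $S\subseteq\{1,\ldots,p\}\setminus\{j\}$, your Step~2 is the entire proof. It is exactly the paper's argument: every marginal contribution vanishes, so the weighted sum $\phi_j^{\mathrm{causal}}$ is zero.

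The problem is Step~1, which you flag as the hard step. It tries to recover that equality from the weaker statement that $u\mapsto V(S\cup\{j\};x_S,u)$ is constant, using
$V(S;x_S)=\int V(S\cup\{j\};x_S,u)\,dP(u\mid\operatorname{do}(X_S=x_S))$.
This identity is false in general. Averaging interventional means over the natural law of $X_j$ does not reproduce the non-interventional mean when $X_j$ depends on unintervened features that also enter $f$; this is the seeing-versus-doing gap.

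Concretely, let $X_1$ be uniform on $\{-1,1\}$, $X_2=X_1$ (so $X_1\to X_2$), $f(x)=x_1x_2$, $j=2$, and $S=\emptyset$. Then $V(\{2\};u)=u\,\mathbb{E}[X_1]=0$ for every $u$, and positivity holds. Yet $V(\emptyset)=\mathbb{E}[X_1^2]=1$. So constancy in $x_j$ at a coalition $S$ does not give a zero marginal contribution at $S$. The obstacle you name (support) is not the real one; the real one is dependence between $X_j$ and the other unintervened features.

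If you want to start from the reading ``$V(S\cup\{j\})$ is constant in $x_j$ for every coalition,'' you need both an extra assumption and a different argument.

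\emph{Markovian case.} Assume a Markovian SCM. Let $N$ be the unintervened non-descendants of $X_j$ after $\operatorname{do}(X_S=x_S)$, and set $\Psi(x_N,u)=V(S\cup N\cup\{j\};x_S,x_N,u)$. The truncated factorization gives
$V(S\cup\{j\};x_S,u)=\int\Psi(x_N,u)\,dP(x_N\mid\operatorname{do}(X_S=x_S))$.
$V(S;x_S)$ is the same integral with $u$ additionally averaged against $P(u\mid x_N,\operatorname{do}(X_S=x_S))$. It is constancy at the \emph{larger} coalition $S\cup N$, not at $S$, that makes the two agree.

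\emph{Hidden confounding.} Here even this fails. Take $U$ unobserved and uniform on $\{-1,1\}$, $X_j=U$, $X_k=X_jU$, and $f(x)=x_k$. Then $V(S\cup\{j\})$ is constant in $x_j$ for both $S=\emptyset$ and $S=\{k\}$. But
$\phi_j^{\mathrm{causal}}=\tfrac12\bigl(V(\{j\})-V(\emptyset)\bigr)=\tfrac12(0-1)=-\tfrac12$.

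So either drop Step~1 and take the hypothesis literally as $V(S\cup\{j\})=V(S)$ for all $S$, as the paper does, or replace Step~1 by the argument above under an explicit no-hidden-confounding assumption. Your closing remark that the displayed $S=\emptyset$ equality alone is insufficient is correct; both examples above witness it, since each gives $\phi_j^{\mathrm{causal}}=-\tfrac12$ while that equality holds.
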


\begin{proof}
If \(X_j\) is causally null, then for every \(S\subseteq \{1, \dots, p\} -j\),
$$
V(S\cup\{j\};x_{S\cup\{j\}})
=
V(S;x_S).
$$

Hence every marginal contribution of \(X_j\) is zero. Since the Shapley value is a weighted average of these marginal contributions,
$$
\phi_j^{\mathrm{causal}}=0.
$$
\end{proof}

The converse does not hold in general. Unlike a value function based on optimal predictive risk, the interventional expectation \(V(S;x_S)\) is not monotone in the set of intervened variables. Consequently, the marginal contributions
$$
\Delta_j(S)
=
V(S\cup\{j\};x_{S\cup\{j\}})
-
V(S;x_S)
$$

may be positive for some coalitions and negative for others. The Shapley value averages these contributions, so they can cancel:
$$
\sum_{S\subseteq -j}w_S\Delta_j(S)=0
$$

even though \(\Delta_j(S)\neq 0\) for some \(S\). Thus
\begin{align*}
\phi_j^{\mathrm{causal}}&=0
\quad\centernot\Longrightarrow\quad\\
\mathbb{E}\!\left[
f(X)\mid \operatorname{do}(X_j=x_j)
\right]
&\,\,=\,\,
\mathbb{E}\!\left[
f(X)\mid \operatorname{do}(X_j=x_j')
\right]
\end{align*}
without an additional assumption preventing such cancellation.

Importantly, this is an \emph{identification} issue rather than a claim that cancellation is common in applications. The causal Shapley literature of \cite{HeskesEtAl2020CausalShapley} and \cite{JanzingMinoricsBlobaum2020CausalFeatureRelevance} establishes the interventional construction and its causal interpretation, but does not, to our knowledge, provide a general no-cancellation condition or establish that cancellation is empirically frequent.

\paragraph{Integrated Gradients}

Integrated Gradients (IG) \citep{SundararajanTalyYan2017IG} is a path-based attribution method that measures the sensitivity of a prediction function relative to a baseline \(x'\). For the straight-line path from \(x'\) to \(x\), the attribution to feature \(X_j\) is
\begin{align}
\label{eq:integrated_gradients}
\operatorname{IG}_j(x;x')
=
(x_j-x_j')
\int_0^1
\frac{\partial f(x'+\alpha(x-x'))}
{\partial x_j}
\,d\alpha.
\end{align}

The method therefore targets \emph{functional} relevance. Its attribution is determined entirely by the prediction function \(f\), the input \(x\), and the chosen baseline \(x'\), rather than by the distribution of \((X,Y)\).

\begin{lemma}[Functional null implies zero Integrated Gradients]
\label{lem:ig_functional_null}
Suppose \(f\) is differentiable almost everywhere and \(X_j\) is functionally null (along the path from $x$ to $x'$). Then, for every input \(x\) and baseline \(x'\) for which the Integrated Gradients path is defined,
$$
\operatorname{IG}_j(x;x')=0.
$$

Consequently, any population-level aggregation of Integrated Gradients that averages or otherwise aggregates the absolute attributions also assigns zero importance to \(X_j\).
\end{lemma}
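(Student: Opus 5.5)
The plan is to show that the integrand in \eqref{eq:integrated_gradients} vanishes for almost every $\alpha\in[0,1]$, and then read off both claims.

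\textbf{Reduction.} Under the global functional null $\mathcal{N}_j^{\text{func-global}}$ of Definition~\ref{defn:global_functional}, we have $f(z)=g(z_{-j})$ for all $z\in\mathcal{X}$. Hence, at every point $z$ where $\partial f/\partial z_j$ exists, the difference quotient
\[
\frac{f(z+h e_j)-f(z)}{h}=\frac{g(z_{-j})-g(z_{-j})}{h}=0
\]
vanishes identically in $h$, so $\partial f(z)/\partial z_j=0$.

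\textbf{The path-restricted hypothesis.} If only the weaker ``along the path'' reading is assumed, I would interpret it as follows. There is an open set $U$ containing the segment $\{x'+\alpha(x-x'):\alpha\in[0,1]\}$ on which $f(z)=g(z_{-j})$. This is the local functional null of Definition~\ref{defn:local_functional}, applied at each path point and glued via compactness of the segment. The same difference-quotient argument then runs on $U$, because $z+he_j\in U$ for $|h|$ small.

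\textbf{The null-set issue.} Differentiability holds only almost everywhere in $\mathbb{R}^p$, and the segment itself may be a Lebesgue-null set. I therefore adopt the convention implicit in defining IG: the path integral is well defined, i.e.\ $\alpha\mapsto\partial_j f(x'+\alpha(x-x'))$ exists for almost every $\alpha$ and is integrable. This is the meaning of ``for which the Integrated Gradients path is defined'' in the statement. Under this convention the integrand is zero wherever it exists, so the integral is zero, and therefore $\operatorname{IG}_j(x;x')=0$ regardless of the factor $(x_j-x_j')$. Stating this convention explicitly is the main obstacle. I would also remark that if $x_j=x_j'$ the conclusion is immediate from the prefactor alone.

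\textbf{Aggregation.} Any population aggregation, such as $\mathbb{E}\,|\operatorname{IG}_j(X;x')|$ or $\sup_x|\operatorname{IG}_j(x;x')|$, is an average or other monotone functional of the pointwise quantities $|\operatorname{IG}_j(x;x')|$. Since these quantities are identically zero for every admissible $(x,x')$, the aggregate is zero. I would close by noting that the converse fails, as with causal Shapley in Lemma~\ref{lem:causal_shapley_null}: the integral can vanish through sign cancellation, so this is a one-way guarantee.
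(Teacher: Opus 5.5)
Your proposal is correct and takes the same route as the paper: functional nullity forces $\partial f/\partial x_j=0$ along the path, so the Integrated Gradients integrand vanishes, and with it $\operatorname{IG}_j(x;x')$ and every aggregate of $|\operatorname{IG}_j|$. Your version is somewhat tighter than the paper's for two reasons. First, the difference-quotient step shows the partial derivative exists and equals zero at \emph{every} path point, so the null-set convention you call the main obstacle is not actually needed, and you avoid the paper's unjustified passage from ``almost everywhere in $\mathbb{R}^p$'' to ``almost every $\alpha$''. Second, reading ``along the path'' as an open neighborhood of the segment is the right choice, because nullity on the segment alone fails: for $f(z)=z_1$ on the diagonal from $(0,0)$ to $(1,1)$, $f$ is a function of $z_2$ on the segment yet $\operatorname{IG}_1=1$.
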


\begin{proof}
Since \(f(x_j,x_{-j})=g(x_{-j})\), the prediction function is constant with respect to \(x_j\). Hence
$$
\frac{\partial f(x)}{\partial x_j}=0
$$

almost everywhere. Therefore,
$$
\frac{\partial f(x'+\alpha(x-x'))}{\partial x_j}
=0
$$

for almost every \(\alpha\in[0,1]\). Substituting into the definition of Integrated Gradients gives
$$
\operatorname{IG}_j(x;x')
=
(x_j-x_j')
\int_0^1 0\,d\alpha
=0.
$$

\end{proof}
This result is the functional-null property underlying the \emph{Sensitivity(b)} axiom of \cite{SundararajanTalyYan2017IG}: if the prediction function is constant in a feature, that feature receives zero attribution. Sundararajan et al. also establish that Integrated Gradients satisfies Sensitivity (a), which requires a nonzero attribution when the input and baseline differ only in feature \(j\) and the corresponding predictions differ. Thus IG is designed to avoid the failure of ordinary local gradients to detect changes that occur away from the endpoint of the attribution path~\cite{SundararajanTalyYan2017IG}.

The result above is a \emph{null guarantee} in the sense functional nullity along a path implies zero Integrated Gradients. The converse, however, is not true. Indeed,
$$
\operatorname{IG}_j(x;x')=0
$$

may occur even when \(f\) depends on \(x_j\), because the signed derivatives along the path can cancel. Therefore,
$$
\mathcal N_j^{\mathrm{func-path}}
\quad\Longrightarrow\quad
\operatorname{IG}_j(x;x')=0,
$$

but, in general,
$$
\operatorname{IG}_j(x;x')=0
\quad\centernot\Longrightarrow\quad
\mathcal N_j^{\mathrm{func-path}}.
$$

\subsection{Feature selection methods}

We now classify standard feature selection methods according to which notion of null importance they satisfy.

\paragraph{Correlation and mutual information screening}
Correlation screening~\cite{FanLv2008SIS, FanSong2010SISGLM} selects features based on marginal association with the response. The \emph{targeted null} is marginal statistical association, with the precise null depending on the association measure. It satisfies:
\[
\mbox{Corr}(X_j, Y) = 0 \;\Leftarrow\; X_j \perp Y,
\]
However the converse does not hold, since correlation captures linear dependence only. The simple univariate example $Y = X^2$ where $X \sim \mbox{Normal}(0,1)$ shows that $\mbox{Corr}(X, Y) = 0$ but $X \not \perp Y$.

The weaknesses of correlation screening are addressed by mutual information screening, where $\mbox{Corr}(X_j, Y)$ is replaced by mutual information $I(X_j;Y)$. For mutual information screening~\cite{Battiti1994MutualInformation, PengLongDing2005mRMR}, the targeted null is marginal statistical since
$$
I(X_j;Y)=0
\quad\Longleftrightarrow\quad
X_j\perp Y
$$
under the standard definition of mutual information. Thus mutual information is an exact population measure for the marginal statistical null.

\paragraph{Lasso}
Under a correctly specified linear model with sparsity assumptions, Lasso~\cite{Tibshirani1996Lasso} recovers features satisfying:
\[
Y \perp X_j \mid X_{-j}
\]
in the population limit.

Thus, under standard assumptions (restricted eigenvalue, irrepresentability), Lasso satisfies a form of \textbf{conditional null consistency}. However, this equivalence fails under model misspecification or nonlinear dependence.

\paragraph{Knockoffs}

Model-X knockoffs \citep{BarberCandes2015Knockoffs, CandèsFanJansonLv2018Knockoffs} target the 
\emph{conditional statistical null} $\mathcal N_j^{\mathrm{stat-cond}}$ (see Defn.~\ref{defn:cond_stat}). The key theoretical result is that, under the Model-X assumptions that the knockoff variables $\widetilde X$ satisfy
$$
(X,\widetilde X)_{\operatorname{swap}(S)}
\overset{d}{=}
(X,\widetilde X)
\quad\text{for all }S,
\qquad
\widetilde X\perp Y\mid X,
$$
the knockoff statistic for a conditionally null feature has a symmetric distribution about zero. This sign symmetry yields finite-sample false discovery rate control:
$$
\operatorname{FDR}\leq q
$$
for the knockoff selection procedure at target level \(q\) \citep{CandèsFanJansonLv2018Knockoffs}. Thus, under valid knockoff construction, the method provides a finite-sample guarantee for identifying features that are non-null with respect to the conditional statistical null. Unlike methods based on a population importance score, however, this is a selection and error-control guarantee rather than a statement that a zero statistic identifies the null.

\paragraph{Leave-One-Covariate-Out (LOCO)}

Leave-One-Covariate-Out (LOCO) importance \citep{LeiZhuWasserman2018LOCO} targets $\mathcal N_j^{\mathrm{risk}}$ (Def~\ref{defn:risknull}). The corresponding population LOCO importance is
$$
\Delta_j^{\mathrm{LOCO}}
=
R^*_{-j}-R^*.
$$
Then by definition,
$$
\Delta_j^{\mathrm{LOCO}}=0
\quad\Longleftrightarrow\quad
\mathcal N_j^{\mathrm{risk}}.
$$
Thus LOCO directly identifies its targeted risk-based null at the population level. The interpretation remains dependent on the predictive function class \(\mathcal F\): risk nullity means that \(X_j\) can be removed without increasing the optimal achievable risk within that class, and does not in general imply marginal or conditional statistical nullity. Lei et al.~\cite{LeiZhuWasserman2018LOCO} develop distribution-free inference for LOCO importance, providing statistical guarantees for estimating this population quantity.

\paragraph{Generalized Covariance Measure (GCM)}

The generalized covariance measure (GCM) \citep{ShahPeters2020GCM} targets the \emph{conditional statistical null}
$$
\mathcal N_j^{\mathrm{cond}}:
\qquad
Y\perp X_j\mid X_{-j}.
$$

For scalar \(X_j\) and \(Y\), define the conditional-mean residuals
$$
\varepsilon_j
=
X_j-\mathbb E[X_j\mid X_{-j}],
\qquad
\varepsilon_Y
=
Y-\mathbb E[Y\mid X_{-j}],
$$

and the population generalized covariance measure
$$
\operatorname{GCM}_j
=
\mathbb E[\varepsilon_j\varepsilon_Y].
$$

If
$$
Y\perp X_j\mid X_{-j},
$$

then
$$
\operatorname{GCM}_j=0,
$$

provided the relevant second moments exist. Indeed, conditional independence implies
$$
\mathbb E[\varepsilon_Y\mid X_j,X_{-j}]
=
\mathbb E[Y\mid X_j,X_{-j}]
-
\mathbb E[Y\mid X_{-j}]
=0,
$$

and hence
$$
\mathbb E[\varepsilon_j\varepsilon_Y]
=
\mathbb E\!\left[
\varepsilon_j
\mathbb E[\varepsilon_Y\mid X_j,X_{-j}]
\right]
=0.
$$

Thus GCM provides a valid \emph{null guarantee}: conditional statistical nullity implies zero population GCM. The converse does not hold in general. A zero covariance of the residuals only rules out conditional linear association between the residualized variables (see the example for correlation screening); conditional dependence can remain through nonlinear or higher-order relationships. Shah and Peters \citep{ShahPeters2020GCM} therefore formulate GCM as a conditional-independence test whose validity relies on consistently estimating the conditional means. 

\paragraph{MinShap}
A recent method MinShap \citep{RaskuttiMinShap} targets the \emph{statistical conditional and risk-based null}: feature \(X_j\) is null if its contribution remains zero after conditioning on all other features. Let
$$
VI_j^S
=
V(S\cup\{j\})-V(S)
$$

denote the marginal value of \(X_j\) given \(X_S\). The MinShap population importance is defined by
$$
\operatorname{MinShap}_j
=
\min_{S\subseteq [p]\setminus\{j\}} VI_j^S,
$$
which simply replaces the average operator by the minimum operator for the statistics $VI_j^S$. The key theoretical result is that, under \emph{null monotonicity},
$$
S\subseteq T,
\qquad
VI_j^S=0
\quad\Longrightarrow\quad
VI_j^T=0,
$$
we have
$$
\operatorname{MinShap}_j=0
\quad\Longleftrightarrow\quad
VI_j^{-j}=0.
$$
Thus, under null monotonicity, MinShap identifies the conditional risk-based null~\cite{RaskuttiMinShap}: zero MinShap importance is equivalent to zero marginal contribution after conditioning on all remaining features.

For the population predictive-risk value functional, null monotonicity holds under suitable faithfulness and no-reverse-causation conditions. In contrast to ordinary Shapley values, which average marginal contributions over subsets and can therefore obscure conditional nullity, MinShap is constructed to preserve this null property.

\paragraph{Mean Decrease Impurity (MDI)}

Mean decrease impurity (MDI) is a feature importance measure derived from tree-based models~\cite{Breiman2001RandomForests}. For a fitted tree, the MDI importance of feature \(X_j\) is the total decrease in the splitting criterion attributable to splits on \(X_j\), typically averaged over the trees in an ensemble.

MDI does not target a single population notion of null importance. In particular, because tree construction is based on greedy impurity reduction, MDI can reflect marginal association, conditional association, feature redundancy, and algorithm-specific choices such as split selection and randomization. Consequently, MDI does not in general provide a null-importance guarantee for either marginal or conditional statistical relevance.

\begin{example}[MDI and conditional null importance]
Consider binary features
$$
X_1\sim\operatorname{Bernoulli}(1/2),
\qquad
X_2=X_1,
$$

and response
$$
Y=X_1.
$$

Then \(X_2\) is conditionally statistically null given \(X_1\):
$$
Y\perp X_2\mid X_1.
$$

However, \(X_1\) and \(X_2\) are observationally indistinguishable predictors of \(Y\). At a node where both variables are available, they produce the same impurity reduction. A tree-building algorithm must therefore resolve a tie through its implementation-specific tie-breaking or randomization mechanism.

In a randomized tree ensemble, different trees can select \(X_1\) or \(X_2\) for the corresponding split. Consequently, both variables can receive positive MDI importance,
$$
\operatorname{MDI}(X_1)>0,
\qquad
\operatorname{MDI}(X_2)>0,
$$

even though
$$
Y\perp X_2\mid X_1.
$$
Thus MDI does not satisfy conditional null importance. The example also illustrates that MDI is sensitive to the representation of redundant predictive information: the allocation of importance between statistically indistinguishable features can depend on the tree-building procedure rather than on an intrinsic population property.
\end{example}

This phenomenon is consistent with the theoretical analysis of tree-based variable importance, which shows that impurity-based importance can be affected by feature distributions, selection mechanisms, and correlations among predictors \cite{StroblBoulesteixZeileisHothorn2008ConditionalVariableImportance,StroblBoulesteixKneibAugustin2008RandomForestBias}. In particular, Strobl et al.~\citep{StroblBoulesteixZeileisHothorn2008ConditionalVariableImportance} demonstrate that the variable-selection mechanism underlying random forests can favor certain variables even when their predictive relevance is comparable.

\subsection{Functional sensitivity methods}

These methods characterize how a fixed prediction function \(f\) depends on a feature rather than testing whether the feature can be removed from a predictive model. Their targeted nulls are therefore functional, but the summaries they use differ in whether they capture local sensitivity, marginal effects, or total functional dependence~\cite{Apley2020CPI, ApleyZhu2020ALE, SimonyanVedaldiZisserman2014Saliency, SmilkovEtAl2017SmoothGrad, GoldsteinEtAl2015ICE, ShrikumarGreensideKundaje2017DeepLIFT}.

\paragraph{Saliency maps}

Saliency maps target the \emph{local functional null}
$$
\mathcal N_j^{\mathrm{func-local}}:
\qquad
\frac{\partial f(x)}{\partial x_j}=0.
$$

The classical saliency map \citep{SimonyanVedaldiZisserman2014Saliency} uses
$$
S_j(x)=
\left|
\frac{\partial f(x)}{\partial x_j}
\right|.
$$

Thus,
$$
\mathcal N_j^{\mathrm{func-local}}
\quad\Longrightarrow\quad
S_j(x)=0
$$

at every point where the derivative exists. Under standard regularity and support conditions, if the derivative is zero throughout the relevant support, then \(f\) is constant in \(x_j\), so the feature is functionally null. The converse can therefore be stated as a functional result rather than as a statistical or predictive one.

Saliency maps are consequently sensitive to local behavior: a feature can have zero gradient at a particular observation even though the prediction function depends on that feature elsewhere. This is one motivation for path-based methods such as Integrated Gradients \citep{SimonyanVedaldiZisserman2014Saliency}.

\paragraph{Partial Dependence}

Partial dependence (PD) \citep{Friedman2001GreedyFunction} targets a \emph{marginal functional effect} rather than the full functional null. Its population partial dependence function is
$$
PD_j(x_j)
=
\mathbb E_{X_{-j}}
\left[
f(x_j,X_{-j})
\right].
$$

A natural null for PD is therefore
$$
\mathcal N_j^{\mathrm{PD}}:
\qquad
PD_j(x_j)=c
\quad\text{for all }x_j,
$$

for some constant \(c\). Hence a flat population PD curve implies zero PD effect. However, a flat PD curve does \emph{not} imply the global functional null $\mathcal{N}_{j}^{\mathrm{func-global}}$ (see Definition \ref{defn:global_functional}). For example, let $X_j$ and $X_k$ be independent with
$\mathbb E[X_k]=0$, and let
$$
f(x_j,x_k)=x_jx_k.
$$

Then
$$
PD_j(x_j)
=
x_j\mathbb E[X_k]
=0
$$

for every \(x_j\), even though \(f\) depends directly on \(x_j\). Thus PD can fail to identify functional relevance because averaging can cancel heterogeneous effects. This distinction is inherent in the marginal averaging construction of PD \citep{Friedman2001GreedyFunction}.

\paragraph{Accumulated Local Effects}

Accumulated Local Effects (ALE) \citep{ApleyZhu2020ALE} also targets a \emph{marginal functional effect}, but averages local derivatives over the observed conditional distribution rather than averaging predictions over the marginal distribution of the other features. For a continuous feature,
$$
ALE_j(x_j)
=
\int_{z_0}^{x_j}
\mathbb E
\left[
\frac{\partial f(z,X_{-j})}{\partial z}
\Bigm|X_j=z
\right]dz.
$$

Consequently, if $\mathcal{N}_{j}^{\mathrm{func-global}}$ holds, then
$$
\frac{\partial f(x)}{\partial x_j}=0
$$

and therefore
$$
ALE_j(x_j)=\text{constant}.
$$

Thus the global functional null implies a flat ALE curve.

The converse does not hold in general: the conditional averaging of local derivatives can cancel across values of \(X_{-j}\), so a flat ALE curve need not imply that \(f\) is globally independent of \(x_j\). ALE was introduced in part to avoid the extrapolation problems of PD when predictors are correlated; it does not turn a marginal functional summary into a complete characterization of functional dependence \citep{ApleyZhu2020ALE}.

A recurring theme is that aggregation can obscure null identification. Shapley values average marginal contributions across coalitions, Integrated Gradients integrates signed derivatives along a path, and PD and ALE average predictions or local derivatives over a marginal or conditional distribution; in each case, nonzero effects in some coalitions, path segments, or conditioning sets can offset one another in the aggregate. This is not inherently a defect of these summaries, but it suggests that methods intended to identify null importance should be evaluated for whether their aggregation step can erase relevant signal.

\begin{table*}[!t]
\centering
\caption{Targeted null importance and theoretical guarantees for representative
feature analysis methods. Results refer to population quantities unless
otherwise indicated.}
\label{tab:null_methods}
\small
\begin{tabularx}{\linewidth}{@{}l@{\hspace{3.5em}}l@{\hspace{3.5em}}X@{\hspace{3.5em}}X@{}}
\textbf{Method} &
\textbf{Targeted null} &
\textbf{Theoretical guarantee} &
\textbf{Key condition} \\
\midrule
Correlation screening &
Marginal statistical &
One-way null guarantee &
Linear relationship \\

Mutual information &
Marginal statistical &
Exact identification &
$X_j\perp Y \iff I(X_j;Y)=0$ \\

MDI &
Unclear &
\textbf{Counterexample} &
Redundant features \\

Knockoffs &
Conditional statistical &
FDR guarantee &
Valid knockoff construction \\

GCM &
Conditional statistical &
One-way null guarantee &
Consistent conditional-mean \\

LOCO &
Risk-based &
Exact identification &
Population risk \\

Lasso &
Linear-model relevance &
Support recovery &
Correct sparse linear model \\

Risk-based Shapley &
Risk-based &
\textbf{Counterexample} &
Shapley averaging \\

Causal Shapley &
Causal &
One-way null guarantee &
Intervention consistency \\

MinShap &
Conditional risk-based &
Exact identification &
Null monotonicity \\

Integrated Gradients &
Path-based functional &
One-way null guarantee &
Differentiability and path \\

Saliency maps &
Local functional &
One-way null guarantee &
Differentiability \\

Partial Dependence &
Marginal functional &
Exact identification &
Population PD \\

ALE &
Marginal functional &
One-way null guarantee &
Regularity and support \\
\bottomrule
\end{tabularx}
\end{table*}
\section{Case studies in null importance}

\subsection{Counterexamples and Equivalence on Synthetic Data}
\label{sec:synthetic_examples}

We next use synthetic data to distinguish the notions of null importance defined in Section \ref{sec:null_notions}. We consider ten data-generating mechanisms, each with $p = 18$ features, six deliberately structured features $\mathcal{S}$, detailed in Table \ref{tab:mean_functions}, and twelve independent $N\left(0, 1\right)$ variables.
For each mechanism, we generate seven replicates at each of three sample sizes $n \in \{50, 500, 5000\}$, and each mechanism has an underlying response signal $f$. We generate both continuous responses,
$$y \sim N\left(f\left(x\right), \sigma_{y}^{2}\right)$$ 
and binary responses,
$$y \sim \text{Bernoulli}(\text{logit}^{-1}\left(f\left(x\right)\right)).$$
Mechanism D1 is an additive linear model, D2 remains additive but is quadratic. D3 and D4 introduce interactions. D4 reduces to XOR when the group sizes $G_{k}$ are set to two. D5 - D7 use simple structural causal models. In D5, each nonnull variable is matched by a highly correlated proxy which is irrelevant for prediction. D6 introduces an unobserved confounder, and D7 uses two length-three chain graphs with all mediators measured, so that conditioning on downstream mediators makes upstream variables conditionally null. D8 - D10 distinguish between functional, conditional, and risk-based null importance. In D8, features appear in highly correlated pairs, both of which appear in $f$. D9 and D10 also use paired variables. In D9, the mean depends quadratically on one member of the pair. The second member is a quadratically-transformed, but corrupted version, of the relevant feature. For D10, one member of the pair influences the mean and the other its variance. For this dataset, we only consider the Gaussian response case.

\begin{table*}
\caption{Synthetic dataset construction used in Section \ref{sec:synthetic_examples}.
Latent variables and noise terms are independent
standard Gaussian variables and the response column gives the mean function $f(x)$
unless otherwise indicated. The structured features are denoted by $\mathcal{S}$; in our instantiation this is simply $\{1, \dots, 6\}$. In D4 the groups $G_{k}$ are pairs of adjacent variables, and in D7, $q_{c} = 2$ is the number of chains and $L_{k} = 3$ denotes the terminal leaf node. For all datasets, we use $\beta = 4$ and $\gamma = 3$. The assignment of each feature in $\mathcal{S}$ to the different notions of null importance is given in Table \ref{tab:simulation_null_notions}.}
\label{tab:mean_functions}

\renewcommand{\arraystretch}{1.55}
\begin{tabularx}{\linewidth}{@{}l@{\hspace{2.5em}} l@{\hspace{4.5em}} X X@{}}\toprule
\textbf{ID}
&
\textbf{Mechanism}
&
\textbf{Features}
&
\textbf{Response signal}
\\
\midrule

D1
&
Linear
&
$x_j \stackrel{\mathrm{iid}}{\sim} \mathcal{N}(0,1)$
&
$\displaystyle
\beta \sum_{j\in\mathcal S} x_j
$
\\
\midrule
D2
&
Quadratic
&
$x_j \stackrel{\mathrm{iid}}{\sim} \mathcal{N}(0,1)$
&
$\displaystyle
\gamma \sum_{j\in\mathcal S}(x_j^2-1)
$
\\
\midrule
D3
&
Pairwise products
&
$x_j \stackrel{\mathrm{iid}}{\sim} \mathcal{N}(0,1)$
&
$\displaystyle
\gamma \sum_k x_{2k-1}x_{2k}
$
\\
\midrule
D4
&
Grouped parity
&
$x_j \stackrel{\mathrm{iid}}{\sim} U[-1,1]$
&
$\displaystyle
-\gamma \sum_k
\prod_{j\in G_k}\operatorname{sign}(x_j)
$
\\
\midrule
D5
&
Dependent features
&
$\begin{aligned}
x_{2k-1} &= z_k,\\
x_{2k}   &= z_k+\tau\epsilon_k
\end{aligned}$
&
$\displaystyle
\gamma\sum_k x_{2k-1}
$
\\
\midrule
D6
&
Confounding
&
$\begin{gathered}
x_j = z_j+\tau\epsilon_j,\\
z_j\ \text{unobserved}
\end{gathered}$
&
$\begin{aligned}
\eta(z) &= \gamma\sum_{j\in\mathcal S}z_j,\\
f(x)    &= \gamma\sum_{j\in\mathcal S}x_j
\end{aligned}$
\\
\midrule
D7
&
Mediated chains
&
$\begin{aligned}
x_{k,1} &\sim \mathcal{N}(0,1),\\
x_{k,\ell} &= x_{k,\ell-1}
             +\tau\epsilon_{k,\ell}
\end{aligned}$
&
$\displaystyle
\frac{\gamma}{\sqrt{q_c}}
\sum_{k=1}^{q_c}x_{k,L_k}
$
\\
\midrule
D8
&
Redundant pairs
&
$\begin{aligned}
x_{2k-1} &= z_k,\\
x_{2k}   &= z_k+\delta\epsilon_k
\end{aligned}$
&
$\displaystyle
\gamma\sum_k
\frac{x_{2k-1}+x_{2k}}{2}
$
\\
\midrule
D9
&
Quadratic proxy
&
$\begin{aligned}
u_k &\sim U[-1,1],\\
x_{2k-1} &= u_k,\\
x_{2k}   &= u_k^2+\delta\epsilon_k
\end{aligned}$
&
$\displaystyle
\gamma\sum_k
\left(x_{2k-1}^2-\frac13\right)
$
\\
\midrule
D10
&
Heteroscedastic 
&
$\begin{aligned}
x_{2k-1} &= m_k,\\
x_{2k}   &= v_k
\end{aligned}$
&
$\begin{aligned}
\mathbb E[Y\mid x]
    &= \gamma\sum_k m_k,\\
\operatorname{sd}(Y\mid x)
    &= \sigma_y
       \exp\left\{
       \frac{\kappa}{2\sqrt q}
       \sum_k v_k
       \right\}
\end{aligned}$
\\

\bottomrule
\end{tabularx}
\end{table*}

To ensure our measured importances reflect the data generating mechanism rather than a model-based approximation to it, we formed predictions using the ground truth function $f$. This allows us to compare null importance notions induced by the mechanism itself, bypassing error from estimating $f$. For D1 -- D3, the structured features $\mathcal{S}$ are nonnull under every notion of null importance. They are useful for comparing methods but do not distinguish notions of null importance. The remaining mechanisms do. For example, D9 is based on Example \ref{ex:failure_bayes} and distinguishes conditional from risk nulls. The true response depends on a quadratic transformations of $x_{1}, x_{3}$, and $x_{5}$, but each of these true features is accompanied by an already transformed proxy. These proxies are conditionally null because they contribute no new information over the original features, but since $\mathcal{F}$ is chosen to be the set of linear functions, they are risk-nonnull. Similarly, D8, based off Example \ref{ex:conditional_not_functional}, provides features that are functionally and marginally nonnull, but conditionally and risk null. Any one member of the redundant pair can be removed without losing information.

For each dataset, we applied representative feature attribution (Sage\footnote{A risk-based Shapley method \cite{CovertLundbergLee2020SAGE}.}, integrated gradients), selection (correlation, lasso, knockoffs, LOCO, GCM, MDI), and functional sensitivity methods (PDP variance, permutation importance). For MDI alone, we did not use the ground truth $f$ and instead fit a CART model, since MDI is not well-defined without a tree model. For the risk-based methods, we estimate a risk $R^{\ast} = \inf_{f \in \mathcal{F}} R\left(f\right)$ for an appropriate $R$ and $\mathcal{F}$. We use a 70/30 train-test split, with squared error loss and cross-entropy on the test set for continuous and binary $y$, respectively. All risk-based methods use the same split, and Sage and MinSHAP share coalitions. For all datasets except D9, we used gradient boosting ensembles to represent $\mathcal{F}$. For D9 we deliberately restricted $\mathcal{F}$ to linear predictors, illustration a violation of Bayes completeness.

Figure \ref{fig:relative_importances} summarizes the resulting scores for the regression case when $n = 500$. The analogous figures for $n = 5000$ and classification re given in Supplementary Figures \ref{fig:null_mass_5000} - \ref{fig:null_mass_classification_5000}. Color encodes
\begin{align*}
\frac{\sum_{k \in \mathcal{N}}\left|\varphi_{k}\right|}{\sum_{j}\left|\varphi_{j}\right|}
\end{align*}
where $\mathcal{N}$ denotes the null set defining each panel. The importance scores are held fixed across columns, only the null notion $\mathcal{N}$ varies (Table \ref{tab:simulation_null_notions}).

\begin{table*}[!t]
\centering
\caption{Notions of null associated with each feature from datasets D1--D10 in Table \ref{tab:mean_functions}. In addition to these structured features, each dataset includes twelve independent standard normal features that are null under every notion.}
\label{tab:simulation_null_notions}
%\begin{tabular}{llll}
\begin{tabularx}{\linewidth}{
  >{\hsize=0.3\hsize}X % id
  >{\hsize=1\hsize}X % subset
  >{\hsize=1.1\hsize}X % null under
  >{\hsize=1.1\hsize}X % nonnull under
}
\toprule
 & Feature subset & Null under notion(s) & Non-null under notion(s) \\
\midrule
D1--D3 & all              & ---                              & all notions \\
D4     & all & marginal                         & conditional, risk, functional, causal \\
D5     & proxies $x_{2k}$     & conditional, risk, functional, causal & marginal \\
D6     & all & causal                           & marginal, conditional, risk, functional \\
D7     & internal nodes $x_{kl}$ s.t. $ l< L_{k}$ & conditional, risk, functional  & marginal, causal \\
D8     & Copies $x_{2k}$      & conditional, risk, causal        & marginal, functional \\
D9     & Transformations $x_{2k}$ & conditional, functional, causal  & marginal, risk \\
D10    & Variance features $x_{2k}$  & risk, functional            & marginal, conditional, causal \\
\bottomrule
\end{tabularx}
\end{table*}

\textbf{Correlation}. Since correlation is sensitive to linear association, it fails to assign importance to nonnull features in D2--D4, which are nonlinear. In D5, it places importance on features that are marginally nonnull but null under every other notion. In D7, it places high importance on members of the mediated chain, which are nonnull only from marginal and interventional causal perspectives.

\textbf{Lasso}. Like correlation, Lasso underperforms on nonlinear datasets. In D5, it does not assign importance to features $x_{2}, x_{4}$, and $x_{6}$, since it selects only one member from correlated groups. Therefore it outperforms risk-based methods across every notion on this dataset. A similar pattern appears in D8, where the correlated proxies are given coefficients equal to zero. On D9, the Lasso's failure to detect the nonlinear association leads it to assign importance to the pretransformed features, which are null according to conditional and risk notions.

\textbf{Knockoffs}. Knockoffs performs similarly to Lasso, though it is more conservative, reflected in the smaller null importance fractions in D2 - D4. On D8, it assigns importance to some correlated proxies, though to a lesser extent than functionally-sensitive methods. A closer comparison with correlation in D5 is given in Figure \ref{fig:minshap_details}a. Knockoffs send conditionally null variables to zero, despite their high correlation with the response. The ability to distinguish conditionally important variables improves with sample size $n$ because conditional null importance is harder to evaluate than marginal null importance.

\textbf{LOCO}. Since LOCO measures the drop in risk after fitting a flexible model, it detects the nonnull quadratic features in D2. On D3 and D4, where the signal is based entirely on interactions, LOCO can in principle detect the risk drop. However, this depends on accurately estimating the interaction structure, which it fails to do here. On the redundant pairs D8, it fails even under marginal and functional null notions. It assigns little importance to any feature in $\mathcal{S}$, since dropping one feature doesn't substantially reduce performance. Estimation error means that features $\mathcal{S}^{C}$ have scores on a comparable scale, making the signal invisible.

\textbf{GCM}. GCM performs similarly to LOCO except on D2. Since the signal features there are independent, $\mathbb{E}\left[X_{j} \mid X_{-j}\right] = 0$. After residualization, GCM tests the association between $X_{j}$ and $X_{j}^{2} - 1$, but since it uses covariance, it cannot detect the quadratic dependence. 

\textbf{PDP variance, Integrated Gradients, and Permutation Importance}. These methods flag the same variables as nonnull across all datasets. Their sensitivity to nonlinear features allows them to perform well on D2--D4 and D9. In D8 they place high importance on the correlated proxies, which are considered null under conditional, causal, and risk null notions, but which are functionally related to the response.

\textbf{Sage}. Sage performs worse than all other methods on the mediated chains D7. There are coalitions where all terms downstream of an internal node are absent. On these coalitions performance improves when adding the internal node. Averaging over coalitions, the internal nodes are flagged as important, reinforcing the discussion of Section \ref{sec:feature_attribution_methods}. See also Figure \ref{fig:minshap_details}b.

\textbf{MinSHAP}. Though it is formed with the same coalitions as Sage, MinSHAP does not place importance on any intermediate nodes in D7. For D3 and D4, MinSHAP assigns zero importance to all features due to failures of the null monotonicity condition. Features on their own have no predictive value but when their interaction partners are present they become important. MinSHAP assigns zero importance to all features in D5 as well, but for a different reason. For the features $x_{2k - 1}$, the population risk is nonzero but small, due to the presence of the highly correlated proxy. On finite samples, the risk differences are subject to Monte Carlo error, and taking the minimum over coalitions biases the importance downwards to zero.

\textbf{MDI}. MDI performs similarly to LOCO except on D8, where the functional method places high importance on the correlated proxies. This faithfulness to the functional form of the data-generating mechanism makes it incompatible with conditional, causal, or risk-based null notions.

The purpose of this example is not to rank one method over others, but rather to demonstrate the concrete differences in data characteristics to which these methods are sensitive. Even with relatively simple generative mechanisms, differences in functional form and dependence structure lead to substantive differences in which features are considered important.

\begin{figure}
    \centering
    \includegraphics[width=1.0\linewidth]{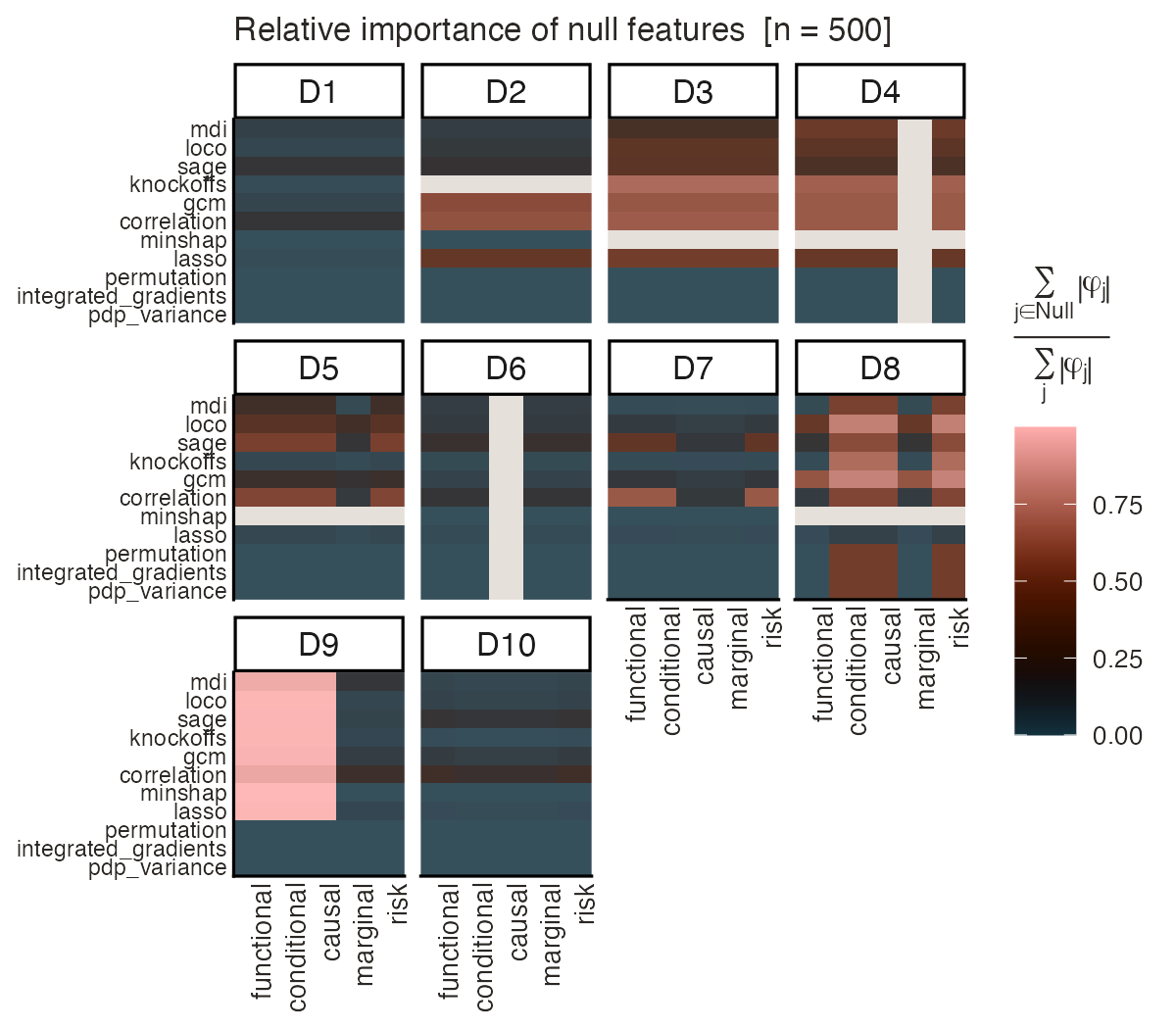}
    \caption{Relative variable importance from null or signal features when they are assigned to alternative notions of null importance. Each row is a synthetic data set as described in Table \ref{tab:mean_functions}, averaging importances across all seeds among datasets with $n = 500$ and the regression task. Each column gives a variable importance method. Gray cells indicate cases where the relative variable importance is undefined, either because all features are considered nonnull (D4 marginal and D6 causal) or because a method did not assign positive importance to any feature.}
    \label{fig:relative_importances}
\end{figure}

\begin{figure}
    \centering
    \includegraphics[width=1.0\linewidth]{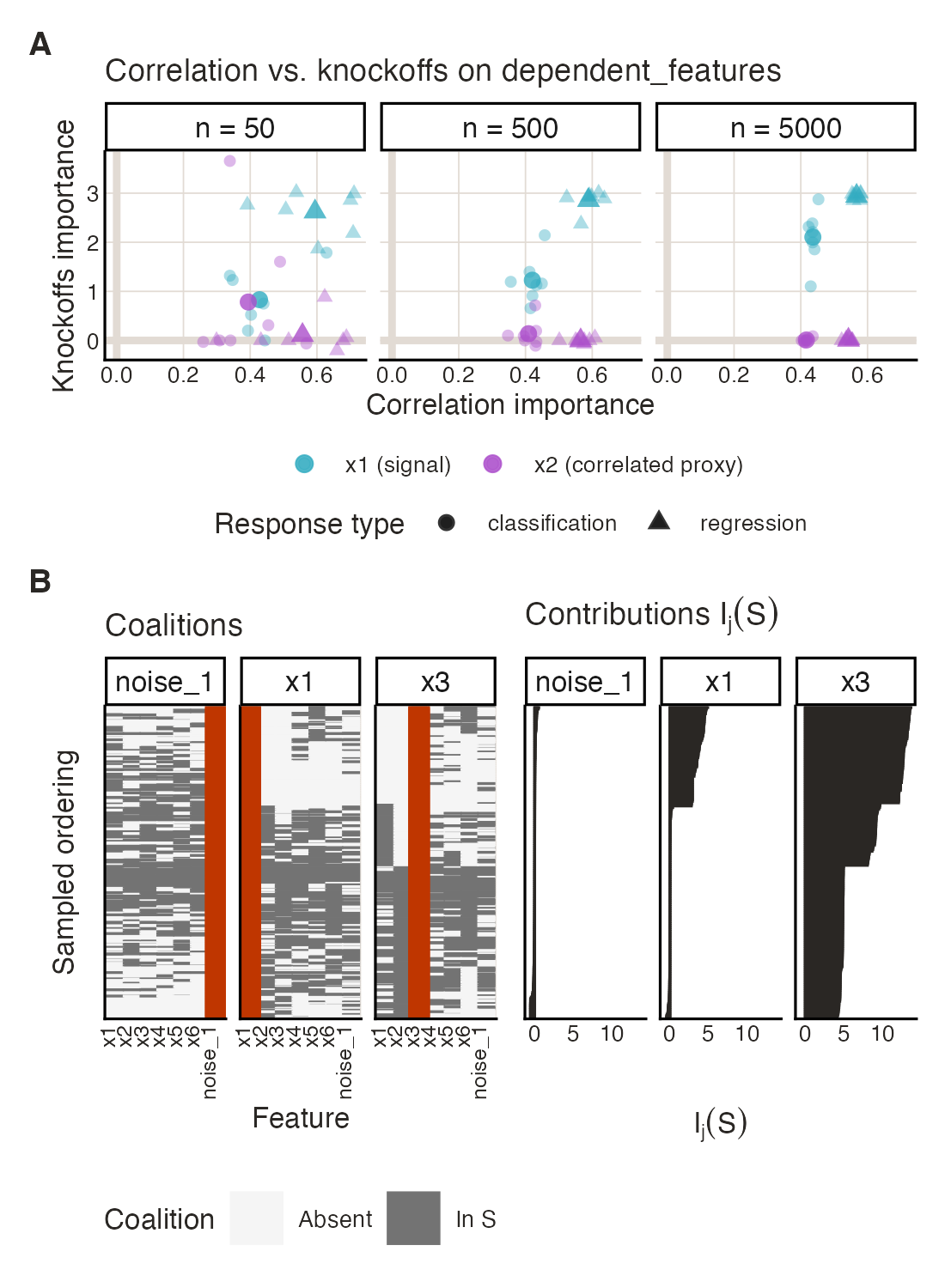}
    \caption{a. A comparison between correlation and knockoff importance scores across sample sizes on dataset D5. Only Knockoffs distinguishes between conditionally relevant and null features. b. The relative contributions of each coalition on the mediated chains task, as used by Sage and MinSHAP. In the left panel, each row along the $y$-axis corresponds to a coalition and shows features $S$ in grey and new addition $j$ in red. The rows of this figure are aligned with those in right panel, where each bar displayed $I_{j}\left(S\right)$.}
    \label{fig:minshap_details}
\end{figure}

\subsection{Local Attributions for functional local null on MNIST}

We can similarly evaluate local attribution methods from a null importance perspective. In particular the targeted null here is the local functional null $\mathcal N_j^{\mathrm{func-local}}$. We apply representative local attribution methods to the MNIST digit classification task. Our model is a pretrained two-stage ResNet with hidden layer widths of 32 and 64 (\texttt{fxmarty/resnet-tiny-mnist} on HuggingFace). We explain 10 examples of the digit ``0.'' The examples were chosen as the five examples with the highest and lowest predicted probability of the correct class, respectively. For local attribution, we apply three gradient-based methods (saliency maps, gradient $\times$ input, and integrated gradients) and two Shapley values-based methods (SHAP and MinSHAP). Gradient $\times$ Input is defined as $$\frac{\partial f(x)}{\partial x_j}\cdot(x_j-x'_j)$$ where $x_j'$ is the same baseline used for integrated gradients. Saliency maps and Gradient $\times$ Input use the gradient evaluated at the observed input, while integrated gradients averages gradients along the path from the baseline to the observed input (see Equation \ref{eq:integrated_gradients}). For both gradient based methods, we used a discretization size of 0.001, and for integrated gradients we considered ten steps. For SHAP and MinSHAP, we considered the formulation with value function derived from instance-level predictions $f\left(x\right)$ rather than the risk-based formulation, which only applies to global attribution. For SHAP, we used a KernelSHAP approximation with 100 background samples and 7300 coalitions. For MinSHAP, we used an IME approximation with 40 coalition orderings. In both cases, we use the value function $V(S)=\mathbb{E}\left[f(X) \mid \operatorname{do}\left(X_S=x_S\right)\right]$, which is appropriate to local attribution. The coalition budgets were chosen to give SHAP and MinSHAP roughly comparable runtimes (1 hour under current settings), allowing the comparison to reflect differences between the attribution methods rather than differences driven by unequal computational runtimes. For each panel, we centered and scaled attributions to mean zero and variance one, ensuring that the attributions are on a comparable scale across panels. Attribution $z$-scores were clipped at $\pm 4$ to prevent outliers from dominating the heatmap.

The resulting attributions are shown in Figure \ref{fig:mnist_attributions}. For correctly predicted samples, the saliency maps are close to zero due to vanishing gradients. Since these examples were those with the highest predicted probabilities of the correct class, their final layer activations are saturated and the gradients are nearly zero. When the saliency gradient is nonzero, Gradient $\times$ Input gives clearer attribution patterns by downweighting pixels that are close to the zero baseline. However, because it still uses the gradient evaluated at the observed input, it also becomes zero when the saliency gradient vanishes. This difficulty is bypassed by integrated gradients, which produces gradients with large magnitudes at intermediate values along the path integral, even for correctly classified examples. This comparison suggests that multiplying by the input can remove much of the background variation seen in saliency, while integrating gradients along the path is what allows integrated gradients to avoid the saturation problem. For the Shapley value-based methods, SHAP remains relatively diffuse, while MinSHAP produces a clearer attribution patterns around the digit. The attribution patterns could become clearer for both methods if more coalitions are included in computation.

This case study suggests that although these local attribution methods all returned images with pixel-level importances, they can differ because they target different nulls, like the functional vs. causal nulls targeted by the gradient and SHAP-based methods, respectively. Further, even when they target the same null, differences in estimation strategy can highlight or suppress particular pixels, like in the case of saliency vs. Gradient $\times$ Input explanations.

\begin{figure*}
    \centering
    \includegraphics[width=0.8\linewidth]{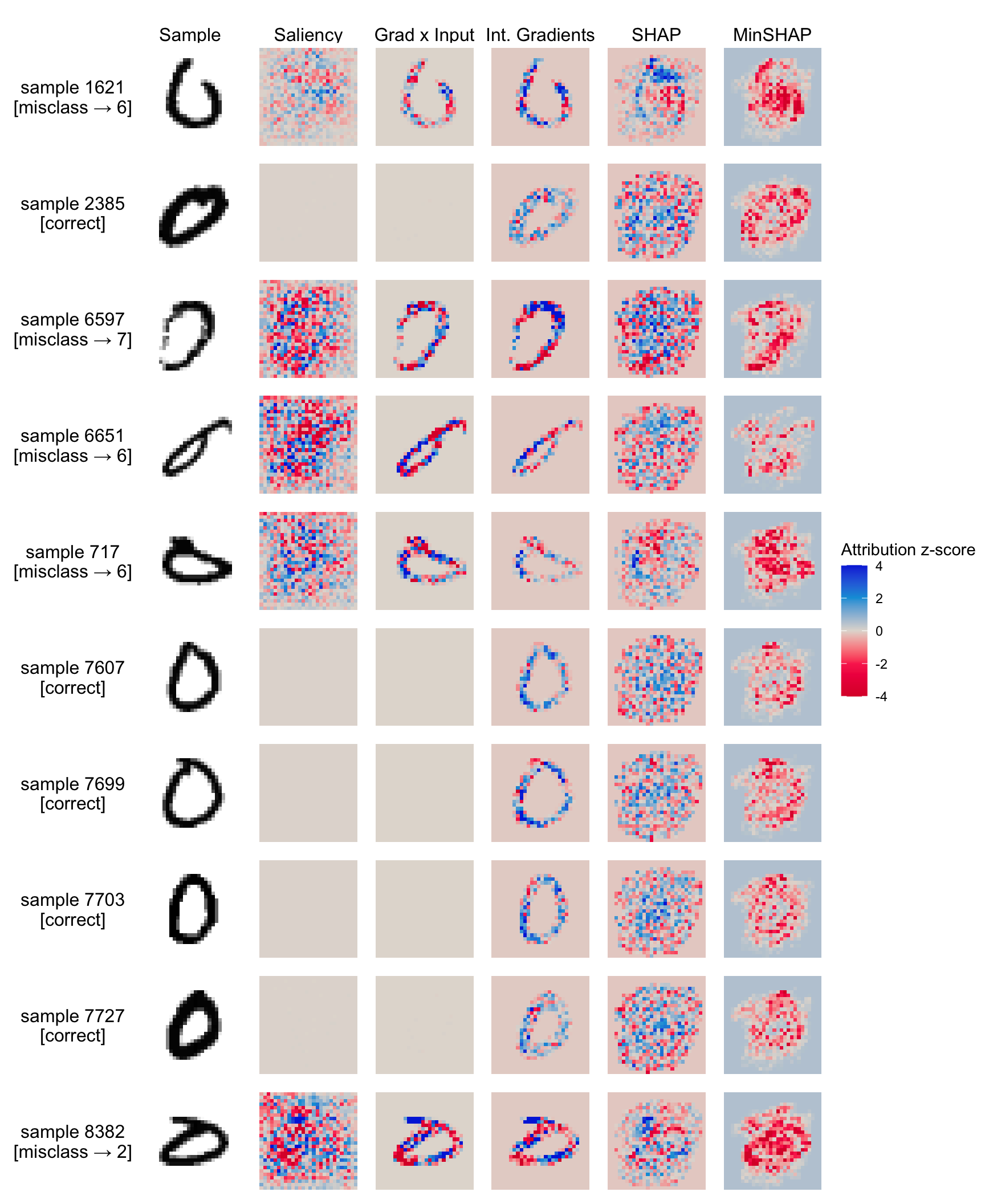}
    \caption{Local feature attributions for selected ten MNIST digit-0 examples. Each row corresponds to one input image, with its prediction status shown on the left, and each column gives the attribution from a different explanation method. Attribution scores are standardized within each panel and displayed as $z$-scores. Blank saliency and Gradient $\times$ input panels reflect near-zero gradients.
}
    \label{fig:mnist_attributions}
\end{figure*}

\subsection{Variable Importance in TCGA Multiomics}

The synthetic data analysis showed that alternative definitions of null importance strongly influence interpretation. While synthetic data provide ground truth, the data generating mechanisms are necessarily simplified. Next we evaluate the impact of alternative null importance notions on a multiomics analysis problem. A common strategy in modern genomics is to study disease mechanisms by predicting phenotypes from high-dimensional sequencing assays. To ensure accurate predictions, it is often necessary to use black-box models to capture nonlinearities and interactions, but the scientific goal is mechanistic understanding, not simply to classify new samples. A feature importance analysis can flag molecular features to perturb in a follow-up experiment, to see their impact on in-vitro disease models. In practice many methods have been used. How do their different sensitivities to null importance notions affect the resulting scientific conclusions?

We analyzed 461 samples from The Cancer Genome Atlas (TCGA). Each sample includes features from mRNA, miRNA, and RPPA (protein) molecular assays. The data include 362 ductal and 99 lobular tumors. Before filtering, the mRNA, miRNA, and RPPA assays contain 59427, 2238, and 131 features respectively. We preprocessed the data following \cite{Novoloaca2024}, which benchmarked multiomic prediction models on this dataset. Because several explanation methods scale poorly in the number of features, we applied more stringent filtering than \cite{Novoloaca2024}. We kept the 150 most variable mRNA features, 100 most variable miRNA features, and 50 most variable RPPA features (a total of 300 features), while \cite{Novoloaca2024} retained the 1000 most variable in total after combining assays. Otherwise, we followed their preprocessing procedure, first imputing missing values with their mean and then residualizing each feature on age, gender, and race. We trained random forest models to predict cancer type. Each model included 50 trees. To evaluate out-of-sample performance, we used stratified five-fold cross-validation. Within each training fold, we evaluate the hyperparameters $\texttt{ min\_samples\_leaf } \in\{1,3,6\}, \quad \texttt{ max\_features } \in\left\{\log _{10} d, \sqrt{d}, \frac{d}{5}, \frac{d}{2}\right\}$ and selected the combination that maximized the out-of-bag Matthews correlation coefficient (MCC). The resulting model was evaluated using that fold's held-out samples, and pooling across folds yields an out-of-samples MCC of 0.618 and area under the curve of 0.770. The classifier used for methods below was fit separately on the full dataset with hyperparameters of $(\texttt{min\_samples\_leaf}, \texttt{max\_features}) = (6, 150)$ again chosen using out-of-bag MCC.

We have applied twelve methods to interpret this final model: MDI, TreeSHAP, permutation importance, PDP variance, integrated gradients (discretized to 64 steps), correlation, lasso coefficients on standardized inputs, Gaussian model-X knockoff statistics, negative log $p$-value from a GCM conditional independence test, LOCO, Sage, and MinSHAP. For the risk-based methods, we used an XGBoost classifier fit with a 70/30 train/test split, estimating risk using binary cross entropy,
\begin{align*}
\hat{R}(S) = -\frac{1}{N_{\text{test}}} \sum_{i \in \text{Test}} &\Bigg[Y_i \log \hat{g}_{S}(X_{i, S}) \\&+ (1 - Y_i) \log\left(1 - \hat{g}_{S}(X_{i, S})\right)\Bigg]
\end{align*}

where $\hat{g}_S(x)$ estimates $P(Y = 1 \mid X_S = x)$. Each method $m$ provides $J = 300$ importance estimates,
$$
\varphi^{m} = \left(\varphi_1^{m}, \ldots, \varphi_{J}^{m}\right).
$$
To account for randomness in the methods, we recomputed explanations using four seeds and averaged the resulting estimates to obtain $\bar{\varphi}_{j}^{m}$. To put importance measures on the same scale, we normalize using,
$$
\pi_{j}^{m} = \frac{\left|\bar{\varphi}_{j}^{m}\right|}{\sum_{k = 1}^{300}\left|\bar{\varphi}_{k}^{m}\right|}, \qquad j = 1, \dots, J.
$$
We applied PCA to the matrix $\Pi = \left(\pi_{j}^{m}\right) \in \mathbb{R}^{M \times J}$. The first two principal components are displayed in Figure \ref{fig:tcga_pca}. We excluded MinSHAP from the PCA because it assigned zero importance to every feature. This is consistent with the high feature-feature correlations present in these data -- across features, the maximum absolute correlation with any other feature has a median of 0.72.

The first principal component (PC1) explains 80.1\% of the variation in normalized feature importance profiles. This dimension contrasts methods with diffuse vs. concentrated feature profiles. Methods with larger PC1 scores, including correlation and the risk-based methods, spread their importances across many features (Supplementary Figure \ref{fig:cumulative_importance}). If the scientific question is whether a feature contributes predictive information or is marginally related to the response, the diffuse profiles for these methods suggests that most are. In contrast, methods with smaller PC1 scores concentrate their normalized importance on fewer features. These features are also separated along PC2 (13.8\% of variance explained) and appears to contrast methods emphasizing conditional independence vs. those sensitive to functional importance. Figure \ref{fig:tcga_examples} examines this interpretation using the partial dependence profiles for the three features with largest absolute PC2 loadings. Integrated gradients assigns high importance to features like Beta-catenin, where the predicted class probabilities rapidly transition between zero and one, even though Knockoffs did not detect conditional dependence between the feature and the tumor type given the remaining features. In contrast, MIMAT0000267, which has the largest PC2 loading, has a relatively flat partial dependence profile but receives high importance under both the Lasso and Knockoffs, both of which account for the remaining features. E-cadherin also has a large positive loading on PC2, but provides an intermediate case, receiving modest importance under both functional and conditional relevance notions. Because correlation, GCM, and LOCO spread importance across many features, their normalized importance is spread thin, leading to shorter feature-level bars. The bar lengths should not be compared directly with more concentrated methods, as they are not absolute importance magnitudes.

Features identified under any of these null notions could motivate experimental follow-up. We emphasize only that if the method's target does not match the scientific question -- for example, if we were searching for rapid changes in probability (functional relevance) using Knockoffs (targeting conditional statistical relevance), or uniquely related features (conditional statistical relevance) using integrated gradients (targeting functional relevance) -- then we may miss a pattern relevant to the motivating question.
\begin{figure*}
    \centering
    \includegraphics[width=0.75\linewidth]{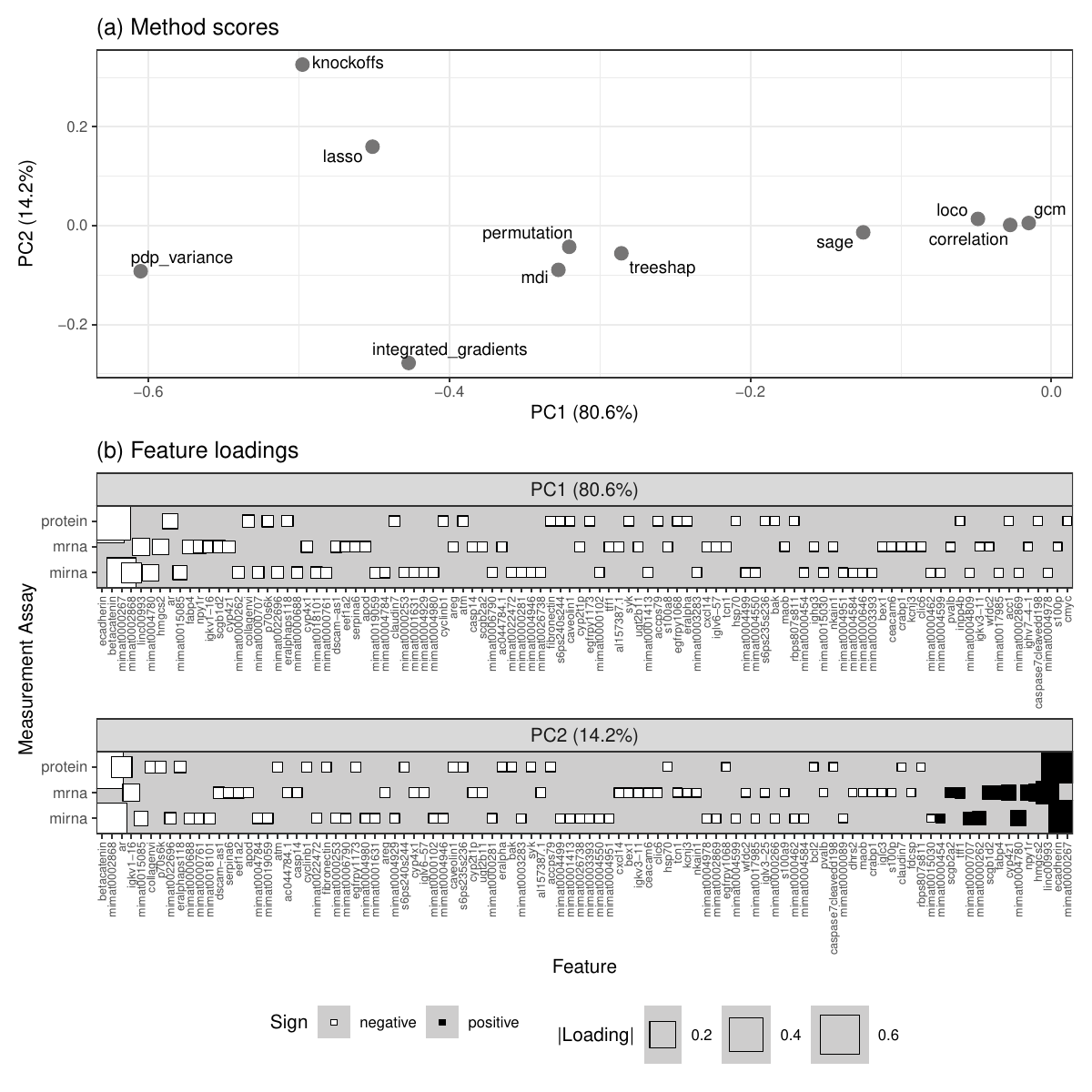}
    \caption{PCA of $\pi_{j}^{m}$ importances from the TCGA example. Panel (a) organizes the methods according to the top two principal component scores. Panel (b) is a Hinton diagram that shows the 50 features that contribute the largest loadings to either PC1 or PC2.}
    \label{fig:tcga_pca}
\end{figure*}

\begin{figure}
    \centering
    \includegraphics[width=1.0\linewidth]{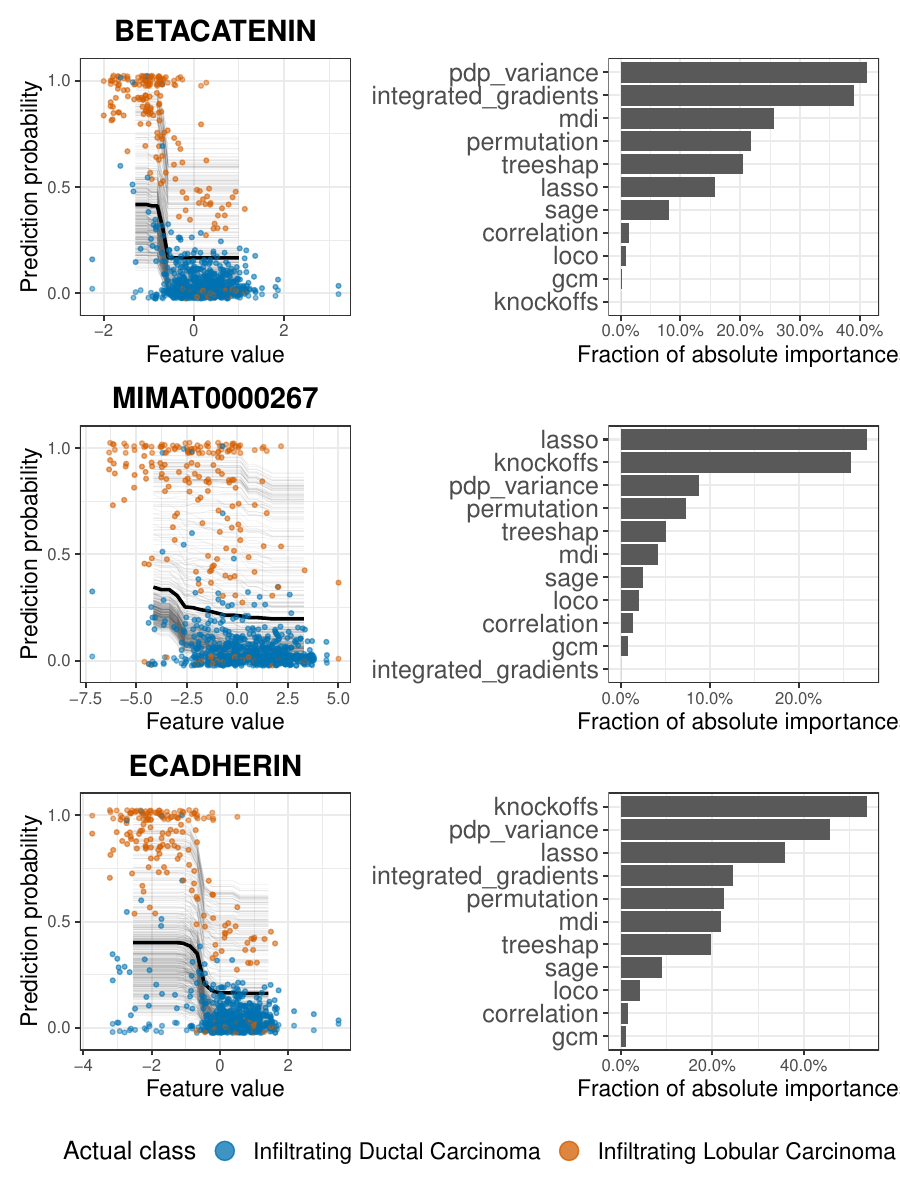}
    \caption{Features with large-magnitude loadings on PC2 in the TCGA multiomics analysis. Left panels show partial dependence profiles for each feature, with points colored by observed cancer type. Right panels show the fraction of absolute importance assigned to the corresponding feature by each method.}
    \label{fig:tcga_examples}
\end{figure}

\section{Discussion}

A central message of this paper is that whether a feature is ``important'' depends on the question being asked, the data-generating process that produced the observations, and the algorithm or value function used to analyze them. These three components are often considered separately: domain scientists formulate questions about which variables matter, data analysts work with an observed joint distribution, and learning algorithms quantify the contribution of variables to a particular predictive or functional objective. The notion of \emph{null importance} provides a common language for connecting these perspectives.

The key advantage of this language is that it makes explicit what it means for a feature to be irrelevant. In the framework developed in this paper, functional, statistical (marginal and conditional), risk-based, and causal notions of null importance correspond to different answers to the question of what should remain invariant when a feature is removed or changed. These notions are not generally equivalent. Their relationships depend on properties of the data-generating process, such as dependence, redundancy, noise, confounding, and interactions, as well as on assumptions about the model class and the value function used by the algorithm.

This separation is important because the scientific question and the available data need not identify the same notion of relevance. A domain scientist may ask whether a variable is causally relevant to an outcome, while the available data support only an observational or predictive question. Similarly, an algorithm may assign a feature high importance because it improves prediction, even when the scientific question concerns functional necessity. Conversely, a feature may be scientifically relevant but receive little importance under a particular predictive objective because its information is redundant with other variables. Null importance makes these distinctions explicit: rather than asking whether a feature is important in the abstract, one can ask under which notion of irrelevance the feature is null, and under what assumptions that notion corresponds to the scientific question of interest.

The equivalence results in this paper identify conditions under which these different perspectives can be aligned. For instance, Lemma~\ref{LemFuncStat} shows that functional and conditional statistical nullity coincide only under deterministic, non-redundant, and sufficient regimes and Lemma~\ref{LemRiskNull} establishes that risk-based and conditional statistical nullity coincide only under well-specification and sufficiently rich function classes. These results specify when an algorithmic notion of null importance can be interpreted as evidence for a particular scientific notion of irrelevance.

The counterexamples illustrate why such alignment cannot be assumed in general. Redundancy can make a feature statistically informative without making it functionally necessary, interactions can make relevance inherently joint and therefore invisible to certain local measures, and confounding can produce observational relevance without causal relevance. These are not merely failure modes of particular feature importance methods. They are consequences of a mismatch between the scientific question, the structure of the data-generating process, and the value function encoded by the algorithm. In this sense, disagreement between feature importance methods can be scientifically informative because it may reveal that the methods are answering different questions rather than that one of them is simply incorrect. These equivalences and counterexamples were reflected in the synthetic and real data results across different interpretable feature analysis methods.

This perspective also suggests a practical role for null importance as a \emph{communication and validation tool}. In applied work, interpretable feature analysis is often used in collaboration with practitioners, domain scientists, or other stakeholders who have a substantive notion of what it means for a variable to matter. The term ``important'' can conceal substantial ambiguity in such collaborations. A practitioner may mean that changing the variable would change the outcome, a scientist may mean that the variable has a causal effect, and a machine-learning practitioner may mean that removing the variable increases prediction error. By explicitly stating the corresponding null importance, these interpretations can be distinguished and compared. This provides a common vocabulary for discussing what a feature importance analysis establishes, what assumptions connect the analysis to the scientific question, and what conclusions can legitimately be drawn from the result.

Viewed this way, interpretable feature analysis is not only a problem of constructing or estimating importance scores. It is a problem of aligning a \emph{scientific question}, a \emph{data-generating process}, and an \emph{algorithmic value function}. Null importance provides a useful organizing principle for this alignment because it forces the analyst to specify what notion of irrelevance is being tested. The framework developed here consequently provides a way to make assumptions visible, identify when different notions of relevance coincide, and diagnose when they necessarily diverge.

\paragraph{Extensions to learned representations}

Although the paper focuses on raw input features $(X_1,\ldots,X_p)$, the same framework applies to learned or hidden representations. Let $\mathcal U=(u_1,\ldots,u_m)$ denote a collection of explanatory units, where a unit may be a latent feature, learned concept, attention head, or computational component. Null importance can be defined for these units by replacing the raw feature set with $\mathcal U$ and specifying the relevant value function.

The distinction between the scientific question, the data-generating process, and the algorithm is particularly important in this setting. Learned units are themselves constructed by an algorithm and may be redundant, distributed, or non-identifiable. Consequently, a unit that is null under one value function need not be null under another, and algorithmic necessity need not correspond directly to a scientifically meaningful notion of relevance. This issue arises naturally in representation learning and mechanistic interpretability, including work on sparse and monosemantic representations~\cite{BrickenEtAl2023TowardsMonosemanticity,CunninghamEwartGhahramani2023SparseAutoencoder,TempletonEtAl2024ScalingMonosemanticity}. Extending the taxonomy of null importance to learned representations provides a way to study how their components relate to the underlying data-generating process and to the scientific questions that the representation were built to answer.

\paragraph{Extensions from individual features to subsets}

A second extension concerns subsets of features. The analysis in this paper focuses on the null importance of an individual feature $X_j$, but the same questions arise for groups of variables. For a subset $S\subseteq\{1,\ldots,p\}$, one may ask whether $X_S$ is null and, conversely, whether there exists a smallest subset $S$ whose complement is null.

These questions become particularly important when the scientific question concerns a collection of variables, or when the data-generating process contains redundancy and interactions. A feature may be individually null because another feature carries the same information, while a group of features may be jointly necessary. Conversely, several different subsets may provide equivalent information for the value function. Thus, individual feature importance need not identify the smallest scientifically or predictively relevant set of variables. Interaction measures and cooperative-game formulations provide related perspectives on these issues \cite{LundbergLeeShapleyInteractions2018,GrabischRoubens1999Interaction}.

A subset-level extension of null importance would therefore need to specify not only which features are being considered jointly, but also the relevant notion of joint irrelevance and the value function against which it is assessed. As with individual features, this would provide a language for distinguishing scientific relevance from statistical, predictive, or functional necessity at the group level.

\bibliographystyle{imsart-number}
\bibliography{null_importance_literature}
\newpage

\begin{appendix}

\section{Supplementary Figures}
\begin{figure}[H]
\includegraphics[width=1.0\linewidth]{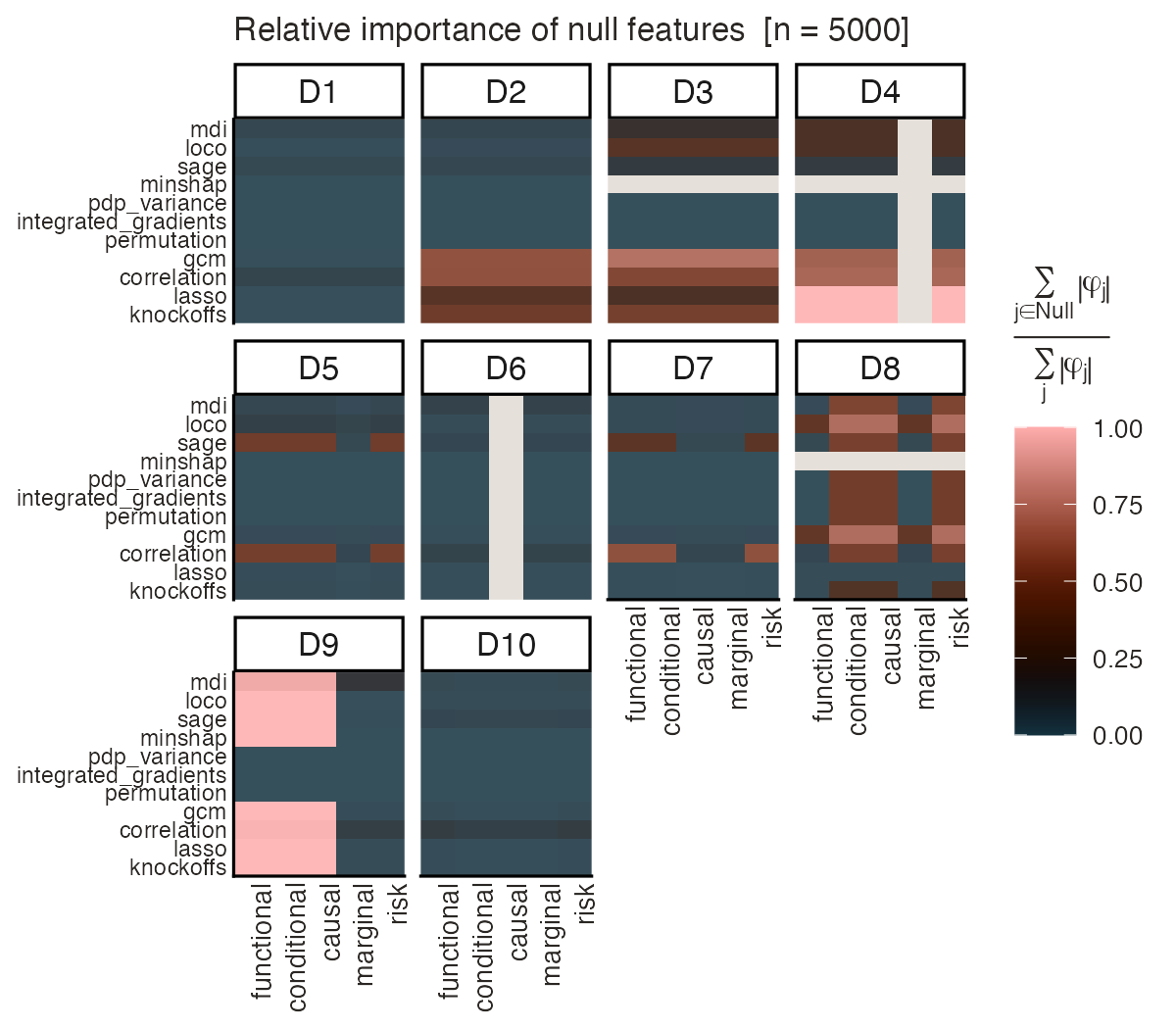}
\caption{The analog of Figure \ref{fig:relative_importances} when $n = 5000$. The same trends appear.}
\label{fig:null_mass_5000}
\end{figure}

\begin{figure}[H]
\includegraphics[width=1.0\linewidth]{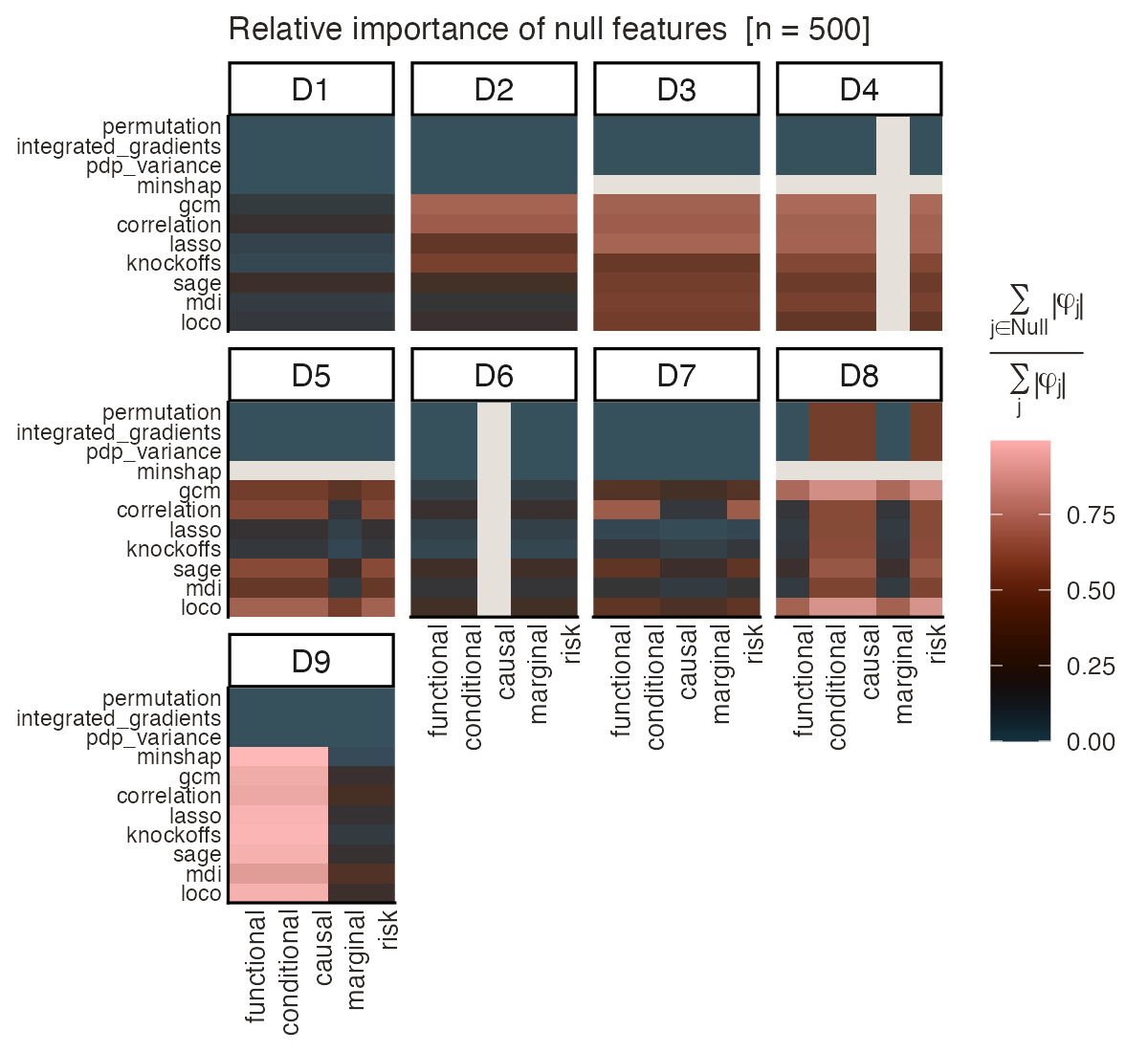}
\caption{The analog of Figure \ref{fig:relative_importances} for classification.}
\label{fig:null_mass_classification_500}
\end{figure}

\begin{figure}[H]
\includegraphics[width=1.\linewidth]{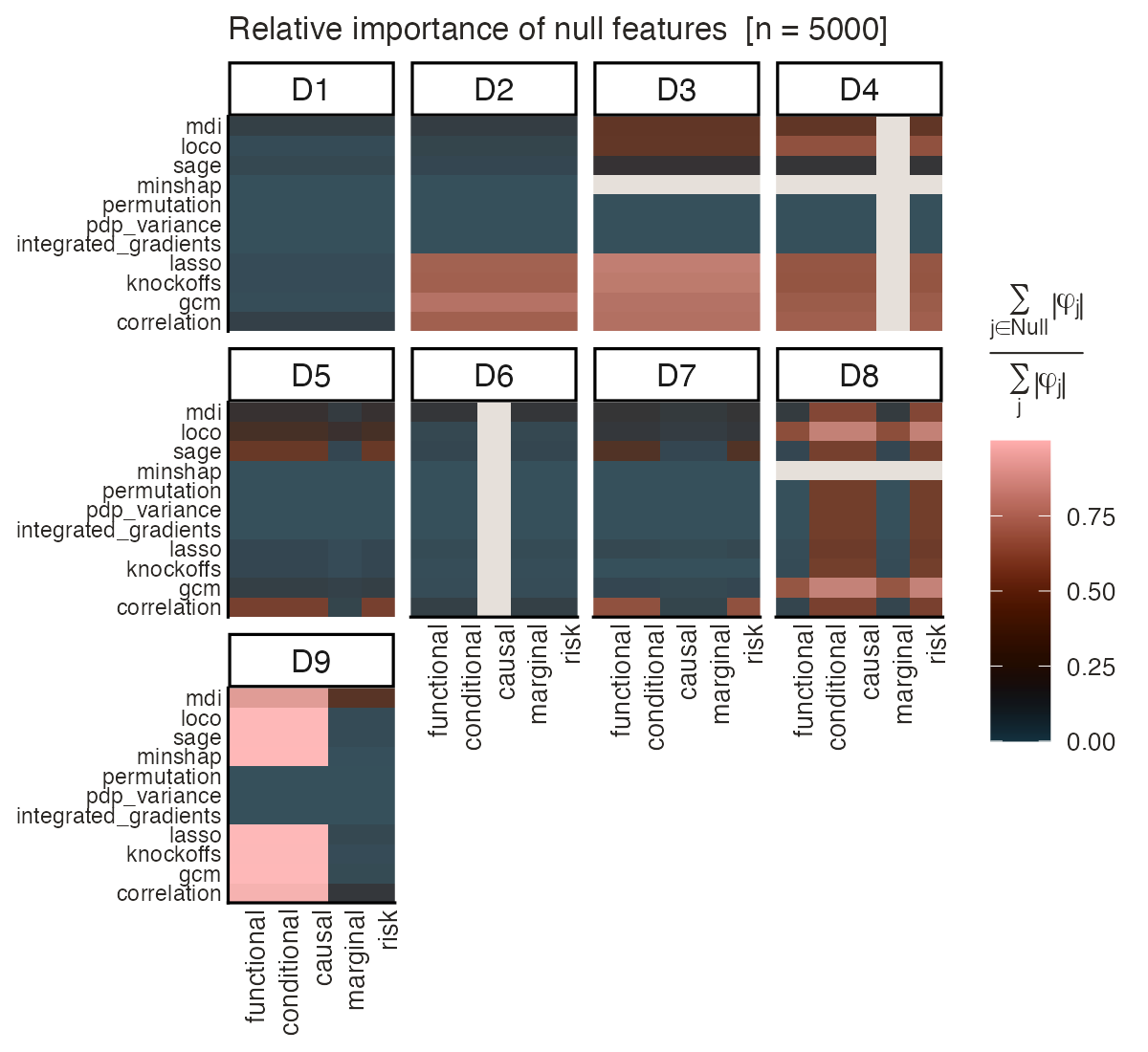}
\caption{The analog of Figure \ref{fig:relative_importances} for classification and when $n = 5000$.}
\label{fig:null_mass_classification_5000}
\end{figure}
\end{appendix}

\begin{figure}[H]
    \includegraphics[width=1.0\linewidth]{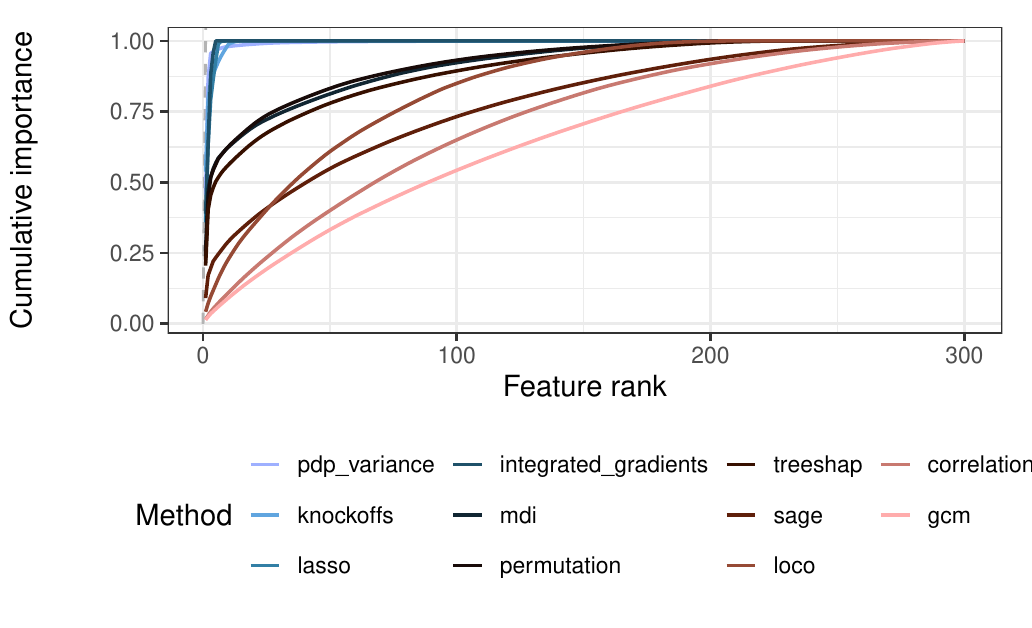}
    \caption{Cumulative variable importances on the TCGA case study. The color gradient has been organized by the PC1 score shown in Figure \ref{fig:tcga_pca}.}
    \label{fig:cumulative_importance}
\end{figure}

\section{Case Study Materials}

Data to replicate the case studies have been deposited on Figshare (\url{https://doi.org/10.6084/m9.figshare.33857824}). A code repository is available on GitHub (\url{https://github.com/krisrs1128/null_importance}).

\end{document}